\journal{ArXiv.org}
\newcommand{\scQX}{{\textsc{QX}}}
\newcommand{\ma}{\mathcal{A}}
\newcommand{\mb}{\mathcal{B}}
\newcommand{\mc}{\mathcal{C}}
\newcommand{\ba}{\bar{\ma}}
\newcommand{\bb}{\bar{\mb}}
\newcommand{\bc}{\bar{\mc}}
\newcommand{\Inv}{\mathsf{Invar}}
\newcommand{\ddmb}{\ddot{\mb}}
\newcommand{\ddma}{\ddot{\ma}}
\newcommand{\ddmc}{\ddot{\mc}}
\newcommand{\dmb}{\dot{\mb}}
\newcommand{\dma}{\dot{\ma}}
\newcommand{\dmc}{\dot{\mc}}
\newcommand{\uu}[1]{\underline{\underline{#1}}}
\newcommand{\ssize}[1]{\mbox{\scriptsize #1}}
\definecolor{darkgreen}{RGB}{7,135,35}
\algrenewcommand\algorithmicrequire{\textbf{Input:}}
\algrenewcommand\algorithmicensure{\textbf{Output:}}
\newtheorem{definition}{Definition}[]{}
\newtheorem{proposition}{Proposition}[]{}
\newtheorem{lemma}{Lemma}[]{}
\newtheorem{theorem}{Theorem}[]{}
{}
\newcounter{examplecounter}
\newenvironment{example}{
	\refstepcounter{examplecounter}%
	
	\vspace{7pt}
	\noindent\textbf{Example \arabic{examplecounter}}%
	\quad 
}{
	
	\vspace{7pt}
	%
}
\begin{document}
	
	\begin{frontmatter}
		
	\title{Understanding the \textsc{QuickXPlain} Algorithm: \\ Simple Explanation and Formal Proof\tnoteref{mytitlenote}}
	\tnotetext[mytitlenote]{This is a preprint of the work \protect\cite{rodler2022qx} that is formally published in the \emph{Artificial Intelligence Review (Artif. Intell. Rev.)} journal.}
	

		
		
		\author{Patrick Rodler}
		\address{University of Klagenfurt \\ Universit\"atsstrasse 65-67, 9020 Klagenfurt}
		
		%
		%
		
		\begin{abstract}
			In his seminal 
			paper of
			2004, Ulrich Junker proposed the \textsc{QuickXPlain} algorithm, which provides a divide-and-conquer computation strategy to find within a given set an irreducible subset with a particular (monotone) property.
			Beside its original application in the domain of constraint satisfaction problems, the algorithm has since then found widespread adoption in areas as different as model-based diagnosis, recommender systems, verification, or the Semantic Web. This popularity is due to the frequent occurrence of the problem of finding irreducible subsets on the one hand, and to \textsc{QuickXPlain}'s general applicability and favorable computational complexity on the other hand. 
			
			However, although (we regularly experience) people are having a hard time understanding \textsc{QuickXPlain} and seeing why it works correctly, 
			%
			a proof of correctness of the algorithm has never been published.
			This is what we account for in this work, by 
			explaining \textsc{QuickXPlain} in a novel tried and tested way and by presenting an intelligible formal proof of it.   
			Apart from showing the correctness of the algorithm and excluding the later detection of errors (\emph{proof and trust effect}), the added value of the availability of a formal proof is, e.g.,   
			\emph{(i)} that the workings 
			of the algorithm often become completely clear only after studying, verifying and comprehending the 
			proof (\emph{didactic effect}),  
			\emph{(ii)} the shown proof methodology can be used as a guidance for proving other recursive algorithms (\emph{transfer effect}), and 
			\emph{(iii)} the possibility of providing ``gapless'' correctness proofs of systems that rely on (results computed by) \textsc{QuickXPlain}, such as numerous model-based debuggers 
			(\emph{completeness effect}).
		\end{abstract}
		
		\begin{keyword}
			\textsc{QuickXPlain} \sep Correctness Proof \sep Proof to Explain \sep Algorithm \sep Find Irreducible Subset with Monotone Property \sep MSMP Problem \sep Minimal Unsatisfiable Subset \sep Minimal Correction Subset \sep Model-Based Diagnosis \sep CSP
		\end{keyword}
		
	\end{frontmatter}
	
	
	\section{Introduction}
	
	The task of finding within a given universe an irreducible subset with a specific monotone property
	is referred to as the \emph{MSMP} (\emph{M}inimal \emph{S}et subject to a \emph{M}onotone \emph{P}redicate) \emph{problem} \cite{bradley2007checking,marques2013minimal}. Take the set of clauses $S:=\{\lnot C, A\lor \lnot B, C\lor \lnot B, \lnot A, B \}$ as an example. This set is obviously unsatisfiable. 
	One task of interest expressible as an MSMP problem is to find a minimal unsatisfiable subset (MUS) of these clauses (which can help, e.g., to understand the cause of the clauses' inconsistency). 
	At this, $S$ is the \emph{universe}, and the predicate that tells whether 
	a given set of clauses
	is satisfiable is \emph{monotone}, i.e., any superset (subset) of an unsatisfiable (satisfiable) clause set is unsatisfiable (satisfiable). In fact, there are two MUSes for $S$, i.e.,  $\{\lnot C, C\lor \lnot B, B\}$ and $\{A\lor \lnot B, \lnot A, B\}$. We call a task, such as MUS, that can be formulated as an MSMP problem a \emph{manifestation of the MSMP problem}. 
	
	MSMP  
	is 
	relevant
	to a wide range of computer science disciplines, including model-based diagnosis \cite{jannach2016model,Rodler2015phd,
		Kalyanpur2006a,rodler2018statichs}, constraint satisfaction problems \cite{junker01,junker04,lecoutre2006recording}, verification \cite{bradley2007checking,bradley2008property,nadel2010boosting,andraus2008reveal}, configuration problems \cite{DBLP:journals/ai/FelfernigFJS04,white2010automated}, 
	knowledge representation and reasoning \cite{darwiche2001decomposable,mccarthy1980circumscription,eiter2009answer,marquis1995knowledge}, recommender systems \cite{felfernig2006integrated,felfernig2009utility}, knowledge integration \cite{Rodler2013,meilicke2011}, as well as description logics and the Semantic Web \cite{Kalyanpur2006a,rodler2019KBS_userstudy,Shchekotykhin2012,Horridge2011a,schlobach2007debugging,schekotihin2018ontodebug}. 
	%
	In all these fields, (sub)problems are addressed which are manifestations of the MSMP problem.
	%
	%
	%
	Example problems---most of them related to the Boolean satisfiability problem---are the computation of \emph{minimal unsatisfiable subsets} \cite{marques2013minimal,dershowitz2006scalable,oh2004amuse,liffiton2008algorithms} (also termed \emph{conflicts} \cite{Reiter87,dekleer1987} or \emph{minimal unsatisfiable cores} \cite{dershowitz2006scalable}), \emph{minimal correction subsets} \cite{birnbaum2003consistent,marques2013computing} (also termed \emph{diagnoses} \cite{Reiter87,dekleer1987}), \emph{prime implicants} \cite{slagle1970new,quine1959cores} (also termed \emph{justifications} \cite{Horridge2011a}), 
	\emph{prime implicates} \cite{marquis1995knowledge,manquinho1997prime,deharbe2013computing}, 
	and \emph{most concise optimal queries to an oracle} \cite{Rodler2013,schekotihin2018ontodebug,DBLP:journals/corr/Rodler16a,DBLP:journals/corr/Rodler2017}. 
	%
	
	%
	Numerous algorithms to solve manifestations of the MSMP problem have been suggested in literature, e.g.,  \cite{bradley2007checking,marques2013minimal,Rodler2015phd,junker01,junker04,bradley2008property,DBLP:journals/corr/Rodler2017,Shchekotykhin2014,shchekotykhin2015mergexplain,felfernig2012efficient,belov2012muser2}.
	For instance, the algorithm proposed by 
	Felfernig et al.\ \cite{felfernig2012efficient} addresses the problem of the computation of minimal correction subsets (diagnoses), 
	and the one suggested by Rodler et al.\ \cite{DBLP:journals/corr/Rodler2017} computes minimal oracle queries that preserve some optimality property. In general,
	an algorithm $A$ for a specific manifestation of the MSMP problem can be used to solve arbitrary manifestations of the MSMP problem if 
	\emph{(i)}~the procedure used by $A$ to decide the monotone predicate is used as a black-box (i.e., given a subset of the universe as input, the procedure outputs 1 if the predicate is true for the subset and 0 otherwise; no more and no less), and 
	\emph{(ii)}~no assumptions or additional techniques are used in $A$ which are specific to one particular manifestation of the MSMP problem. 
	
	Not all algorithms meet these two criteria. For instance, there are algorithms that rely on additional outputs beyond the mere evaluation of the predicate (e.g., certificate-refinement-based algorithms \cite{marques2013minimal}), 
	or glass-box approaches that use non-trivial modifications of the predicate decision procedure to solve the MSMP problem (e.g., theorem provers that record the axioms taking part in the deduction of a contradiction while performing a consistency check \cite{Kalyanpur2006a}). These methods violate (i). 
	Moreover, e.g., algorithms geared to the computation of minimal unsatisfiable subsets that leverage a technique called model rotation \cite{marques2011improving} are not applicable, e.g., to the problem of finding minimal correction subsets, since there is no concept equivalent to model rotation for minimal correction subsets \cite{marques2013minimal}.
	Thus, such algorithms violate (ii). 
	%
	%
	
	Among the general MSMP algorithms that satisfy (i) and (ii), \textsc{QuickXPlain} \cite{junker04} (QX for short), proposed by Ulrich Junker in 2004, is one of the most popular and most frequently adapted.\footnote{Judged by taking the citation tally on Google Scholar as a criterion; as of January 2020, the \textsc{QuickXPlain} paper boasts 420 citations.} 
	Likely reasons for the widespread use of QX are its mild theoretical complexity in terms of the number of (usually expensive\footnote{In many manifestations of the MSMP problem, predicate decision procedures are implemented by theorem provers, e.g., SAT-solvers \cite{marques2013minimal} or description logic reasoners \cite{Rodler2015phd}.}) predicate evaluations required \cite{marques2013minimal,junker04}, as well as its favorable practical performance for important problems (such as conflict \cite{shchekotykhin2008computing} or diagnosis \cite{Shchekotykhin2014} computation for model-based diagnosis).
	In literature, QX is utilized in different ways; it is \emph{(a)}~\emph{(re)used as is} for suitable manifestations of MSMP \cite{DBLP:journals/ai/FelfernigFJS04}, \emph{(b)}~\emph{adapted} in order to solve other manifestations of MSMP \cite{Rodler2015phd}, as well as \emph{(c)}~\emph{modified or extended}, respectively, e.g., to achieve a better performance for a particular MSMP manifestation \cite{marques2013minimal}, to solve extensions of the MSMP problem \cite{DBLP:journals/corr/Rodler2017}, or to compute multiple minimal subsets of the universe in a single run \cite{shchekotykhin2015mergexplain}.
	
	Despite its popularity and common use, from the author's experience,\footnote{
		In our research and teaching on model-based diagnosis, we frequently discuss and analyze QX---one of the core algorithms used in our works and prototypes---with students as well as other faculty (including highly proficient university professors specialized in, e.g., algorithms and data structures). The feedback of people is usually that they cannot fully grasp the workings of QX before they take significant time to go through a particular example thoroughly and noting down all single steps of the algorithm. According to people's comments, the main obstacle appears to be the recursive nature of the algorithm.
	}
	QX appears to be quite poorly understood by reading and thinking through the algorithm, and, for most people, requires significant and time-consuming attention until they are able to properly explain the algorithm.
	In particular, people often complain they do not see why it correctly computes a minimal subset of the universe. This is not least because no proof of QX has yet been published. 
	
	In this work, we account for this by presenting 
	a clear and intelligible proof of QX. 
	The public availability of a proof comes with several benefits and serves i.a.\ the following purposes:
	%
	\begin{description}[noitemsep,leftmargin=16pt]
		\item[Proof Effect] \emph{(a)} It shows  QX's correctness and makes it 
		verifiable for everyone in a straightforward step-by-step manner (without the need to accomplish the non-trivial task of coming up with an own proof). \emph{(b)} It creates compliance with common scientific practice. That is, every proposal of an algorithm
		should be accompanied with a (full and public) formal proof of correctness. This demand is even more vital for a highly influential algorithm like QX.
		\item[Didactic Effect] \emph{(a)} 
		It promotes (proper and full) understanding \cite{hanna1996proof}
		of the workings of QX, which is otherwise for many people only possible in a laborious way (e.g., by noting down and exercising through examples and attempting to verify QX's soundness on concrete cases). \emph{(b)} It provides the basis for understanding (hundreds of) other works or algorithms that use, rely on, adapt, modify or extend QX.
		\item[Completeness Effect] It is necessary to establish and prove the full correctness of other algorithms that rely on (the correctness) of QX, such as a myriad of algorithms in the field of model-based diagnosis.
		\item[Trust and Sustainability Effect] It excludes the possibility of the (later) detection of flaws in the algorithm, and is thus the only basis for placing full confidence in the proper-functioning of QX.\footnote{A prominent example which shows that even seminal papers are not charmed against errors in absence of formal proofs, and thus underscores the importance of (public) proofs, is the highly influential paper of Raymond Reiter from 1987 \cite{Reiter87}. It proposes the hitting set algorithm for model-based diagnosis, but omits a formal proof of correctness. And, indeed, a critical error in the algorithm was later found (and corrected) by Russell Greiner \cite{greiner1989correction}.}
		\item[Transfer Effect] It showcases a stereotypic proof concept for recursive algorithms and can provide guidance to researchers when approaching the (often challenging task of formulating a) proof of other recursive algorithms. The reason is that recursive algorithms can often be proven using a similar methodology (as ours), e.g., by showing certain invariants and using a proof by induction.
	\end{description}

	The rest of this paper is organized as follows. We discuss related work in Sec.~\ref{sec:related}, before we briefly introduce the theoretical concepts required for the understanding and proof of QX in Sec.~\ref{sec:basics}. Then, in Sec.~\ref{sec:algo}, we state the QX algorithm in a 
	(slightly) more general formulation than originally published in \cite{junker04}, i.e., we present QX as a general method to tackle the MSMP problem.\footnote{The original algorithm was depicted specifically as a searcher for explanations or relaxations for over-constrained constraint satisfaction problems (CSPs). 
		Although the proper interpretation of the original formulation to address arbitrary MSMP problems different from CSPs may be relatively straightforward (for people familiar with CSPs), we believe that our more general depiction (cf.\ \cite{marques2013minimal}) can help readers non-familiar with the domain of CSPs to understand and correctly use QX without needing to properly re-interpret concepts from an unknown field.} 
	In addition, we explain the functioning of QX, and present an illustrative example using a notation that proved particularly comprehensible in our experience.\footnote{We (informally) experimented with different variants how to explain QX, and found out (through the feedback of discussion partners, e.g., students) that the shown representation was more accessible than others.} The proof is given in Sec.~\ref{sec:proof}, and concluding remarks are made in Sec.~\ref{sec:conclusion}.

	\section{Related Work}
	\label{sec:related}
	Bradley and Manna \cite{bradley2008property} state an algorithm claimed to be equivalent to QX and give a proof of this algorithm. However, first, there is no proof that the stated algorithm is indeed equivalent to QX (which is not clear 
	from the formulation given in \cite{bradley2008property}). Second, the proof given in \cite{bradley2008property} does not appear to be of great help to better understand QX, as the reader needs to become familiar with the notation and concepts used in \cite{bradley2008property} in the first place, and needs to map the pseudocode notation of \cite{bradley2008property} to the largely different one stuck to by Junker in the original QX-paper \cite{junker04}. Apart from that, the proof in \cite{bradley2008property}---despite (or perhaps exactly because of) its undeniable elegance---is not ``operation-centric'' in that it is not amenable to a mental ``tracking'' by means of the call-recursion-tree produced by QX. 
	In contrast, our proof is \emph{illustrative} as it can be viewed as directly traversing the 
	call-recursion-tree (cf.\ Fig.~\ref{fig:call_tree} later), while showing that certain invariant statements remain valid through all transitions in the tree, and using these invariants to prove that all (recursive) calls
	work correctly. Moreover, we segment our proof into small, intuitive, and easily digestible chunks, thus putting a special \emph{focus on its clarity, elucidation, and didactic value}. Finally, our proof enables the verification of the correctness of the \emph{original formulation of QX}.
	Hence, we believe that our proof is more valuable to people 
	having a hard time understanding QX than the one in \cite{bradley2008property}.  
	Or, to put it into the words of Hanna \cite{hanna2000proof,hanna1990some}, we present a \emph{proof that explains}, rather one that solely proves.

	\section{Basics}
	\label{sec:basics}
	$\scQX$ can be employed to find, for an input set $U$, a minimal\footnote{Throughout this paper, minimality always refers to minimality wrt.\ set-inclusion.} 
	subset $X \subseteq U$ that has a certain monotone property $p$. An example would be an (unsatisfiable) knowledge base (set of logical sentences) $U$ for which we are interested in finding a minimal unsatisfiable subset (MUS) $X$.
	\begin{definition}[Monotone Property]\label{def:monotonic_property}
		Let $U$ be the universe (a set of elements) and $p:2^U \to \{0,1\}$ be a function where $p(X)=1$ iff property $p$ holds for $X\subseteq U$. 
		Then, $p$ is a monotone property iff $p(\emptyset)=0$ and
		\begin{align*}
		\forall X', X'' \subseteq U:\;\, X' \subset X'' \implies p(X') \leq p(X'')
		\end{align*}
	\end{definition}
	So, $p$ is monotone iff, given that $p$ holds for some set $X'$, it follows that $p$ also holds for any superset $X''$ of $X'$. 
	An equivalent definition is: If $p$ does not hold for some set $X''$, $p$ does not hold for any subset $X'$ of $X''$ either.
	
	In practical applications it is often a requirement that \emph{(a)}~some elements of the universe must not occur in the sought minimal subset, 
	or \emph{(b)}~the minimal subset of the universe should be found in the context of some reference set. Both cases (a) and (b) can be subsumed as searching for a minimal subset of the \emph{analyzed set} $\ma$ given some \emph{background} $\mb$. In case (a), $\mb$ is defined as a subset of the universe $U$ (e.g., in a fault localization task, those sentences of a knowledge base $U$ that are assumed to be correct) and $\ma$ is constituted by all other elements of the universe $U\setminus\mb$ (those sentences in $U$ that are possibly faulty); in case (b), $\mb$ is some additional set of relevance to the universe (e.g., a knowledge base of general medical knowledge), whereas $\ma$ is the universe itself (e.g., a knowledge base describing a medical sub-discipline). 
	For example, the problem of finding a MUS wrt.\ $\ma$ given background $\mb$ would be to search for a minimal set $X$ of elements in $\ma$ such that $X \cup \mb$ is unsatisfiable.
	\begin{definition}[$p$-Problem-Instance]\label{def:p-PI}
		Let $\ma$ (analyzed set) and $\mb$ (background) be (related) finite sets of elements where $\ma \cap \mb = \emptyset$, and let $p$ be a monotone predicate. Then we call the tuple $\tuple{\ma,\mb}$ a $p$-problem-instance ($p$-PI).
	\end{definition}
	%
	\begin{definition}[Minimal $p$-Set (given some Background)]\label{def:min_p-set}
		Let $\tuple{\ma,\mb}$ be a $p$-PI. Then, we call $X$ a \emph{$p$-set wrt.\ $\tuple{\ma,\mb}$} iff $X\subseteq \ma$ and $p(X \cup \mb)=1$. We call a $p$-set $X$ wrt.\ $\tuple{\ma,\mb}$ \emph{minimal} iff there is no $p$-set $X' \subset X$ wrt.\ $\tuple{\ma,\mb}$. 
		
		%
	\end{definition}
	Immediate consequences of Defs. \ref{def:monotonic_property} and \ref{def:min_p-set} are:
	\begin{proposition}[Existence of a $p$-Set]\label{prop:existence_of_p-set}
		\leavevmode
		\begin{enumerate}[noitemsep,label=(\arabic*)]
			\item \label{prop:existence_of_p-set:1} A (minimal) $p$-set exists for $\tuple{\ma,\mb}$ iff $p(\ma\cup\mb)=1$.
			\item \label{prop:existence_of_p-set:2} $\emptyset$ is a---and the only---(minimal) $p$-set wrt.\ $\tuple{\ma,\mb}$ iff $p(\mb)=1$.
		\end{enumerate}
	\end{proposition}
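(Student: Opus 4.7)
The plan is to unpack Definitions~\ref{def:monotonic_property} and \ref{def:min_p-set} and verify each of the four implications (two per biconditional) directly, using monotonicity of $p$ and the finiteness of $\ma$ stipulated in Def.~\ref{def:p-PI}. Nothing deep is needed; the argument is a routine consequence-of-definitions check, but I want to be explicit about why the parenthetical ``(minimal)'' in both parts is justified.

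For part \ref{prop:existence_of_p-set:1}, I would first handle the ($\Rightarrow$) direction: if some $p$-set $X$ wrt.\ $\tuple{\ma,\mb}$ exists, then $X\subseteq \ma$ gives $X\cup\mb \subseteq \ma\cup\mb$, and monotonicity of $p$ together with $p(X\cup\mb)=1$ forces $p(\ma\cup\mb)=1$. For ($\Leftarrow$), the set $X:=\ma$ itself satisfies $X\subseteq \ma$ and $p(X\cup\mb)=p(\ma\cup\mb)=1$, so a $p$-set exists. To upgrade this to a \emph{minimal} $p$-set, I would invoke finiteness of $\ma$ (hence finiteness of $2^{\ma}$) and apply a simple descending argument: starting from $\ma$, repeatedly remove an element as long as the resulting set is still a $p$-set; the process must terminate and produces a minimal $p$-set. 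This last step is the only place where a brief argument is needed beyond pure unfolding of definitions, and I would flag it as the main (minor) subtlety.

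For part \ref{prop:existence_of_p-set:2}, the ($\Leftarrow$) direction is direct: if $p(\mb)=1$ then $\emptyset\subseteq\ma$ and $p(\emptyset\cup\mb)=p(\mb)=1$, so $\emptyset$ is a $p$-set. It is trivially minimal (no proper subset exists), and it is the \emph{only} minimal $p$-set, because any other $p$-set $X\neq\emptyset$ satisfies $\emptyset\subsetneq X$, so $X$ is not minimal. The ($\Rightarrow$) direction is likewise immediate: by Def.~\ref{def:min_p-set}, being a $p$-set wrt.\ $\tuple{\ma,\mb}$ requires $p(\emptyset\cup\mb)=p(\mb)=1$.

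The only point requiring any care is the existence of a \emph{minimal} $p$-set in part~\ref{prop:existence_of_p-set:1}, which rests on the finiteness of $\ma$; everything else follows by direct application of Defs.~\ref{def:monotonic_property} and~\ref{def:min_p-set}. I would therefore present the proof as four short self-contained implications, with the finite-descent argument the only non-bookkeeping step.
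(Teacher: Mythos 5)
Your proof is correct. The paper does not actually prove this proposition---it merely introduces it with ``Immediate consequences of Defs.~\ref{def:monotonic_property} and \ref{def:min_p-set} are:'' and gives no argument---so there is nothing to compare against beyond noting that your four-implication decomposition is the natural way to fill that gap, and that you correctly isolate the one step that is not pure unfolding of definitions, namely obtaining a \emph{minimal} $p$-set from an existing one by a finite descending-removal argument over subsets of the finite set $\ma$.
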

	
	\begin{algorithm}[t]
		\small
		\caption{$\scQX$: Computation of a Minimal $p$-Set} \label{algo:qx}
		\begin{algorithmic}[1]
			\Require a $p$-PI $\tuple{\ma,\mb}$ where $\ma$ is the analyzed set and $\mb$ is the background
			\Ensure a minimal $p$-set wrt.\ $\tuple{\ma,\mb}$, if existent; `no $p$-set', otherwise 
			
			\vspace{5pt}
			
			\Procedure{$\scQX$}{$\tuple{\ma,\mb}$}
			\If{$p(\ma\cup\mb)=0$} \label{algoline:validitytest1}
			\State \Return~`no $p$-set'  \label{algoline:return_no_p-set}
			\ElsIf{$\ma = \emptyset$}\label{algoline:O=0}
			\State \Return~$\emptyset$\label{algoline:emptyset}
			\Else \State \Return \Call{$\scQX'$}{$\mb, \tuple{\ma,\mb}$} \label{algoline:call_QX'}
			\EndIf
			\EndProcedure
			
			\vspace{5pt}
			
			\Procedure{$\scQX'$}{$\mc,\tuple{\ma,\mb}$}
			\If{$\mc \neq \emptyset \land p(\mb)=1$}\label{algoline:validitytest2} 
			\State \Return $\emptyset$ \label{algoline:return_emptyset}
			\EndIf
			\If{$|\ma| = 1$}  \label{algoline:test_singleton}              
			\State \Return $\ma$ \label{algoline:return_O}
			\EndIf
			\State $k \gets \Call{split}{|\ma|}$\label{algoline:split}
			\State $\ma_1 \gets \Call{get}{\ma, 1, k}$\label{algoline:get1} 
			\State $\ma_2 \gets \Call{get}{\ma, k + 1, |\ma|}$\label{algoline:get2}
			\State $X_2 \gets \Call{$\scQX'$}{\ma_1, \tuple{\ma_2,\mb\cup \ma_1}}$ \label{algoline:recursive_call1}
			\State $X_1 \gets \Call{$\scQX'$}{X_2, \tuple{\ma_1,\mb\cup X_2}}$ \label{algoline:recursive_call2}
			\State \Return $X_1 \cup X_2$ \label{algoline:return_upwards}
			\EndProcedure
			
		\end{algorithmic}
		\normalsize
	\end{algorithm}
	
	\section{Brief Review and Explanation of \scQX}
	\label{sec:algo}
	The QX algorithm is depicted by Alg.~\ref{algo:qx}. It gets as input a $p$-PI $\tuple{\ma,\mb}$ and assumes a sound and complete oracle that answers queries of the form $p(X)$ for arbitrary $X \subseteq \ma\cup\mb$. If existent, QX returns a minimal $p$-set wrt.\ $\tuple{\ma,\mb}$; otherwise, 'no $p$-set' is output.
	In a nutshell, QX works as follows:\vspace{5pt} 
	
	\noindent \emph{Trivial Cases:} 
	Before line~\ref{algoline:call_QX'} is reached, the algorithm checks if trivial cases apply, i.e., if either no $p$-set exists or the analyzed set $\ma$ is empty, and returns according outputs.
	In case the execution reaches line~\ref{algoline:call_QX'}, the recursive procedure QX' is called. 
	In the very first execution of 
	QX', the presence of two other trivial cases is checked in lines~\ref{algoline:validitytest2} and \ref{algoline:test_singleton}. (Line~\ref{algoline:validitytest2}): If the background $\mb$ is non-empty and $p(\mb)=1$, then the empty set, the only minimal $p$-set in this case (cf.\ Prop.\ref{prop:existence_of_p-set}.\ref{prop:existence_of_p-set:2}), is directly returned and QX terminates. Otherwise, we know the empty set is not a $p$-set, i.e., every (minimal) $p$-set is non-empty. 
	(Line~\ref{algoline:test_singleton}): If the analyzed set $\ma$ is a singleton, then $\ma$ is directly returned and QX terminates. \vspace{0pt}
	
	\begin{figure}
		\centering
		\includegraphics[width=0.99\textwidth]{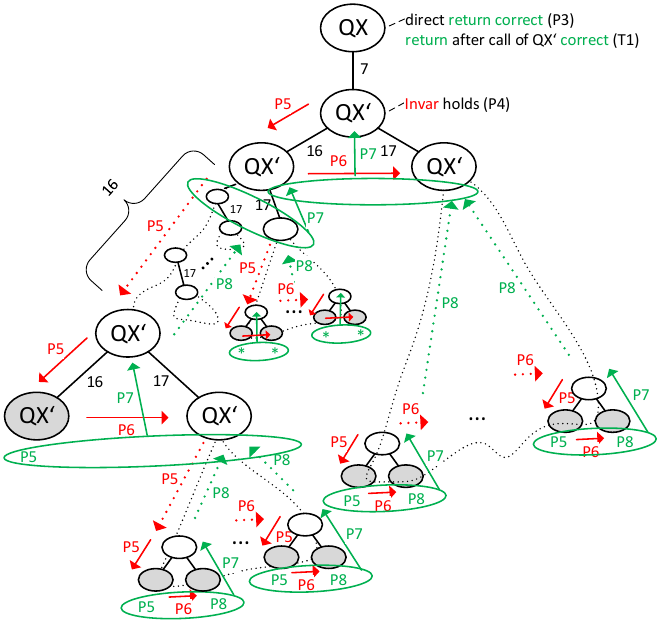}
		\caption{Call-recursion-tree produced by QX (cf.\ Sec.~\ref{sec:algo}). The \emph{grayscale part of the figure} provides a schematic illustration of the procedure calls executed in a single run of QX (where the recursion is entered, i.e., no trivial case applies). Each node (ellipse) represents one call of the procedure named within the ellipse. Edge labels (7,16,17) refer to the lines in Alg.~\ref{algo:qx} where the respective call is made. White ellipses (non-leaf nodes) are calls that issue further recursive calls (in lines 16 and 17), whereas gray ellipses (leaf nodes) are calls that directly return (i.e., in line~\ref{algoline:return_emptyset} or \ref{algoline:return_O}). The \emph{colored part of the figure} visualizes the meaning and consequences of the theorem (T1) and the various propositions (P$i$, for $i\in\{3,4,5,6,7,8\}$) that constitute the proof (cf.\ Sec.~\ref{sec:proof}). {\color{red}Red arrows} indicate proven propagations of the invariant property {\color{red} Invar} (see Def.~\ref{def:invariant_property}) between calls. {\color{darkgreen} Green arrows 
				and labels} indicate that respective calls return {\color{darkgreen} correct outputs}. Start to read the colored illustrations from the top, just like QX proceeds. That is, due to P3, direct returns yield correct outputs. If QX' is called, Invar 
			holds by P4. If Invar holds for some call, then it is always propagated downwards to the left subtree because of P5. At the first leaf node, a correct output is returned, also due to P5. If the output of a left subtree is correct, then Invar propagates to the right subtree (P6). If the output of both the left and the right subtree is correct, then the output of the root is correct (P7). If Invar holds at the root call of some (sub)tree, then this root call returns a correct output (P8). Note how these propositions guarantee that Invar, and thus correct outputs, can be derived for all nodes of the call-recursion-tree. Intuitively, red arrows propagate Invar downwards through the tree, which then ensures correct outcomes at the leafs, from where these correct outputs enable further propagation of Invar to the right, from where the inferred correct outputs are recursively propagated upwards until the root node is reached.}
		\label{fig:call_tree}
	\end{figure}
	
	\noindent \emph{Recursion:} Subsequently, the recursion is started. The principle is to partition the analyzed set $\ma = \{a_1,\dots,a_{|\ma|}\}$ into two \emph{non-empty} (e.g., equal-sized) subsets $\ma_1 = \{a_1,\dots,a_k\}$ and $\ma_2=\{a_{k+1},\dots,a_{|\ma|}\}$ (\textsc{split} and \textsc{get} functions; lines~\ref{algoline:split}--\ref{algoline:get2}), and to analyze these subsets recursively (\emph{divide-and-conquer}). In this vein, a binary call-recursion-tree is built (as sketched in Fig.~\ref{fig:call_tree}), including the \emph{root} QX'-call made in line~\ref{algoline:call_QX'} and \emph{two subtrees}, the left one rooted at the call of QX' in line~\ref{algoline:recursive_call1} which analyzes $\ma_2$, and the right one rooted at the call of QX' in line~\ref{algoline:recursive_call2} which analyzes $\ma_1$.
	Let the finally returned minimal $p$-set be denoted by $X$, and let us call all elements of $X$ \emph{relevant}, all others \emph{irrelevant}.
	Then, the left subtree (finally) returns the subset of those elements ($X_2$) from $\ma_2$ that belong to $X$, and the right subtree (finally) returns the subset of those elements ($X_1$) from $\ma_1$ that belong to $X$. 
	\begin{itemize}[leftmargin=*]
		\item \emph{Left subtree (recursive QX'-call in line~\ref{algoline:recursive_call1}):} The first question is: Are all elements of $\ma_2$ irrelevant? Or, equivalently: Does $\mb \cup \ma_1$ already contain a minimal $p$-set, i.e., $p(\mb\cup\ma_1)=1$? This is evaluated in line~\ref{algoline:validitytest2}; note: $\mc = \ma_1 \neq \emptyset$. If positive, 
		$\emptyset$ is returned and the subtree is not further expanded. Otherwise, we know there is some relevant element in $\ma_2$. Hence, the analysis of $\ma_2$ is started. That is, in line~\ref{algoline:test_singleton}, the singleton test is performed for $\ma_2$. In the affirmative case, we have proven that the single element in $\ma_2$ is 
		relevant.
		The reason is that $p(\mb \cup \ma_1) = 0$, as verified in line~\ref{algoline:validitytest2} just before, and that adding the single element in $\ma_2$ makes the predicate true,\footnote{Such an element is commonly referred to as a \emph{necessary} or a \emph{transition} element \cite{belov2012muser2}.} i.e., $p(\mb\cup\ma_1\cup\ma_2) = p(\mb\cup\ma) = 1$, as verified in line~\ref{algoline:validitytest1} at the very beginning. If $\ma_2$ is a non-singleton, it is again partitioned and the subsets are analyzed recursively, which results in two new 
		subtrees in the call-recursion-tree.
		\item \emph{Right subtree (recursive QX'-call in line~\ref{algoline:recursive_call2}):} Here, we can distiguish between two possible cases, i.e., either the 
		set $X_2$ returned by the left subtree is \emph{(i)}~empty or \emph{(ii)}~non-empty. 
		\\Given (i), we know that $\ma_1$ must include a relevant element. Reason: $\mb\cup\ma_1$ contains a minimal $p$-set (as verified in the left subtree before returning the empty set) and every $p$-set is non-empty (as verified in line~\ref{algoline:validitytest2} in the course of checking the \emph{Trivial Cases}, see above). Hence, $\ma_1$ is further analyzed in lines~\ref{algoline:test_singleton} et seqq.\ (which might lead to a direct return if $\ma_1$ is a singleton and thus relevant, or to further recursive subtrees otherwise).
		\\For (ii), the question is: Given the subset $X_2$ of the $p$-set, are all elements of $\ma_1$ irrelevant? Or, equivalently: Does $\mb \cup X_2$ already contain a minimal $p$-set, i.e., $p(\mb \cup X_2)=1$? This is answered in line~\ref{algoline:validitytest2}; note: $\mc = X_2 \neq \emptyset$ due to case (ii). In the affirmative case, the empty set is returned, i.e., no elements of $\ma_1$ are relevant and the final $p$-set $X$ found by QX is equal to $X_2$. If the answer is negative, $\ma_1$ does include some relevant element and is thus further analyzed in lines~\ref{algoline:test_singleton} et seqq. (which might lead to a direct return if $\ma_1$ is a singleton and thus relevant, or to further recursive subtrees otherwise).  
	\end{itemize} 
	Finally, the union of the outcomes of left ($X_2$) and right ($X_1$) subtrees is a minimal $p$-set wrt.\ $\tuple{\ma,\mb}$ and returned in line~\ref{algoline:return_upwards}. 
	%
	
	\begin{example}\label{ex:workings_qx}
		We illustrate the functioning of QX by means of a simple example.\vspace{5pt} 
		
		\noindent \emph{Input Problem and Parameter Setting:}
		Assume the analyzed set $\ma = \{1,2,3,4,$ $5,6,7,8\}$, the initial background $\mb = \emptyset$, and that there are two minimal $p$-sets wrt.\ $\tuple{\ma,\mb}$, $X=\{3,4,7\}$ and $Y=\{4,5,8\}$. Further, suppose that QX pursues a splitting strategy where a set is always partitioned 
		into equal-sized subsets in each iteration, i.e., $\textsc{split}(n)$ returns $\lceil \frac{n}{2}\rceil$ (note: this leads to the best worst-case complexity of QX, cf.\ \cite{junker04}). \vspace{5pt} 
		
		\noindent\emph{Notation:} Below, we show the workings of QX on this example by means of a tried and tested ``flat'' notation.\footnote{Note, we intentionally abstain from a notation which is guided by the call-recursion-tree or which lists all variables and their values (which we found was often perceived difficult to understand, e.g., since same variable names are differently assigned in all the recursive calls). The reason is: While explaining QX to people (mostly computer scientists) using various representations, we found out via people's feedback that the presented ``flat'' notation could best convey the intuition behind QX; moreover, it enabled people to correctly solve new examples on their own.} 
		%
		In this notation, the single-underlined subset denotes the current input to the function $p$ in line~\ref{algoline:validitytest2}, the double-underlined elements are those that are already fixed elements of the returned minimal $p$-set, 
		and the grayed out elements those that are definitely not in the returned minimal $p$-set.
		Finally, $\textcircled{\scriptsize 1}$ signifies that the tested set (single-underlined along with double-underlined elements) is a $p$-set (function $p$ in line~\ref{algoline:validitytest2} returns 1); $\textcircled{\scriptsize 0}$ means it is no $p$-set (function $p$ in line~\ref{algoline:validitytest2} returns 0).\vspace{5pt}  
		
		\noindent \emph{How QX Proceeds:} After verifying that there is a non-empty $p$-set wrt.\ $\tuple{\ma,\mb}$ and that $|\ma|>1$ (i.e., after the checks in lines~\ref{algoline:validitytest1} and \ref{algoline:O=0} are negative, QX' is called in line~\ref{algoline:call_QX'}, and the checks in the first execution of lines~\ref{algoline:validitytest2} and \ref{algoline:test_singleton} are negative), 
		QX performs the following actions:
		\begin{align*}
		\mbox{\scriptsize (1)} \qquad [\underline{1, 2, 3, 4}, 5, 6, 7,  8] & \;\;\;\textcircled{\scriptsize 0} \quad \rightarrow \text{\quad some element of $p$-set among 5,6,7,8}\\
		\mbox{\scriptsize (2)} \qquad[\underline{1, 2, 3, 4, 5, 6}, 7,  8] & \;\;\;\textcircled{\scriptsize 0} \quad \rightarrow \text{\quad some element of $p$-set among 7,8}\\
		\mbox{\scriptsize (3)} \qquad[\underline{1, 2, 3, 4, 5, 6, 7},  8] & \;\;\;\textcircled{\scriptsize 1} \quad \rightarrow \text{\quad 7 found, 8 irrelevant}\\ 
		\mbox{\scriptsize (4)} \qquad[\underline{1, 2, 3, 4}, 5, 6, \uu{7},  {\color{lightgray}8}] & \;\;\;\textcircled{\scriptsize 1} \quad \rightarrow \text{\quad 5,6 irrelevant}\\
		\mbox{\scriptsize (5)} \qquad[1, 2, 3, 4, {\color{lightgray}5, 6}, \uu{7},  {\color{lightgray}8}] & \;\;\;\textcircled{\scriptsize 0} \quad \rightarrow \text{\quad some element of $p$-set among 1,2,3,4}\\
		\mbox{\scriptsize (6)} \qquad[\underline{1, 2}, 3, 4, {\color{lightgray}5, 6}, \uu{7},  {\color{lightgray}8}] & \;\;\;\textcircled{\scriptsize 0} \quad \rightarrow \text{\quad some element of $p$-set among 3,4}\\
		\mbox{\scriptsize (7)} \qquad[\underline{1, 2, 3}, 4, {\color{lightgray}5, 6}, \uu{7},  {\color{lightgray}8}] & \;\;\;\textcircled{\scriptsize 0} \quad \rightarrow \text{\quad 4 found}\\
		\mbox{\scriptsize (8)} \qquad[\underline{1, 2}, 3, \uu{4}, {\color{lightgray}5, 6}, \uu{7},  {\color{lightgray}8}] & \;\;\;\textcircled{\scriptsize 0} \quad \rightarrow \text{\quad 3 found}\\
		\mbox{\scriptsize (9)} \qquad[{\color{lightgray}1, 2}, \uu{3, 4}, {\color{lightgray}5, 6}, \uu{7},  {\color{lightgray}8}] & \;\;\;\textcircled{\scriptsize 1} \quad \rightarrow \text{\quad 1,2 irrelevant}
		\end{align*}
		\noindent \emph{Explanation:} After splitting $\ma$ into two subsets of equal size, in step \ssize{(1)}, QX tests if there is a $p$-set in the left half $\{1,2,3,4\}$. Since negative, the right half $\{5,6,7,8\}$ is again split into equal-sized subsets, and the left one $\{5,6\}$ is added to the left half $\{1,2,3,4\}$ of the original set. Because this larger set $\{1,2,3,4,5,6\}$ still does not contain any $p$-set, the right subset $\{7,8\}$ is again split and the left part (7) added to the tested set, yielding $\{1,2,3,4,5,6,7\}$. Due to the positive predicate-test for this set, 7 is confirmed as an element of the found minimal $p$-set, and 8 is irrelevant. From now on, 7, as a fixed element of the $p$-set, takes part in all further executed predicate tests. 
		
		In step \ssize{(4)}, the goal is to figure out whether the left half $\{5,6\}$ of $\{5,6,7,8\}$ also contains relevant elements. To this end, the left half $\{1,2,3,4\}$ of $\ma$, along with 7, is tested, and positive. Therefore, a $p$-set is included in $\{1,2,3,4,7\}$ and $\{5,6\}$ is irrelevant. At this point, the output of the left subtree of the root, the one that analyzed $\{5,6,7,8\}$, is determined and fixed, i.e., is given by 7. The next task is to find the relevant elements in the right subtree, i.e., among $\{1,2,3,4\}$. As a consequence, in step \ssize{(5)}, 7 alone is tested to check if all elements of $\{1,2,3,4\}$ are irrelevant. The result is negative, which is why the left half is split, and the left subset $\{1,2\}$ is tested along with 7, also negative. Thus, $\{3,4\}$ does include relevant elements. In step \ssize{(7)}, QX finds that the element 3 alone from the set $\{3,4\}$ does not suffice to produce a $p$-set, i.e., the test for $\{1,2,3,7\}$ is negative. This lets us conclude that 4 must be in the $p$-set. So, 4 is fixed. To check the relevance of 3, $\{1,2,4,7\}$ is tested, yielding a negative result, which proves that 3 is relevant. The final test in step \ssize{(9)} if $\{1,2\}$ includes relevant elements as well, is negative, and 1,2 marked irrelevant. The set $\{3,4,7\}$ is finally returned, which coincides with $X$, one of our minimal $p$-sets.  
		\qed	
	\end{example}

	\section{Proof of \scQX}
	\label{sec:proof}
	In this section, we give a formal proof of the termination and soundness of the $\scQX$ algorithm depicted by Alg.~\ref{algo:qx}. By ``soundness'' we refer to the property that $\scQX$ outputs a minimal $p$-set wrt.\ the $p$-PI it gets as an input, if a $p$-set exists, and 'no $p$-set' otherwise. While reading and thinking through the proof, the reader might consider it insightful to keep track of the meaning, implications, and interrelations of the various propositions in the proof by means of Fig.~\ref{fig:call_tree}.  
	
	\begin{proposition}[Termination]\label{prop:termination}
		Let $\tuple{\ma,\mb}$ be a $p$-PI. Then $\scQX(\tuple{\ma,\mb})$ terminates.	
	\end{proposition}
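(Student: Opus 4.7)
My plan is to argue termination by induction on $|\ma|$, exploiting the fact that the recursion strictly shrinks the analyzed set.

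First I would note that the outer procedure \scQX{} is itself non-recursive: it performs at most one predicate evaluation (line~\ref{algoline:validitytest1}), a size check (line~\ref{algoline:O=0}), and then either returns directly or invokes \scQX'{} exactly once (line~\ref{algoline:call_QX'}). Hence it suffices to establish that every call to \scQX'$(\mc,\tuple{\ma,\mb})$ with $\ma\ne\emptyset$ terminates; note that the only invocation issued from \scQX{} indeed satisfies $\ma\ne\emptyset$, because the branch on line~\ref{algoline:O=0} has already handled the empty case.

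Next, I would perform strong induction on $n := |\ma|$. For the base case $n = 1$, the body of \scQX'{} either returns at line~\ref{algoline:return_emptyset} (when $\mc\ne\emptyset$ and $p(\mb)=1$), or falls through to line~\ref{algoline:test_singleton} where the singleton check succeeds and the procedure returns $\ma$ at line~\ref{algoline:return_O}; in both sub-cases no recursive call is made, so the call terminates. For the inductive step, assume $n \ge 2$ and that every call to \scQX'{} with a strictly smaller analyzed set terminates. If the early-exit at line~\ref{algoline:return_emptyset} triggers, we are done. Otherwise, the algorithm proceeds to line~\ref{algoline:split} and computes $k = \textsc{split}(|\ma|)$, which (by the specification of \textsc{split} stated in Sec.~\ref{sec:algo}, producing a \emph{non-empty} partition) satisfies $1 \le k < |\ma|$; consequently both $\ma_1$ and $\ma_2$ are non-empty and have cardinality strictly less than $n$. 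The two recursive calls on lines~\ref{algoline:recursive_call1} and~\ref{algoline:recursive_call2} are therefore made with analyzed sets of size $|\ma_2| < n$ and $|\ma_1| < n$ respectively, so by the induction hypothesis each of them terminates. After both return, the procedure merely forms the union on line~\ref{algoline:return_upwards} and returns, so the call itself terminates.

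The only real subtlety, and what I would spell out carefully, is justifying that the recursive calls are always invoked with strictly smaller $\ma$ (which in turn rests on $\textsc{split}(|\ma|) \in \{1,\dots,|\ma|-1\}$ whenever $|\ma|\ge 2$). This is a property of the auxiliary \textsc{split} routine rather than of \scQX{} itself, and I would either take it as a standing assumption on the splitting strategy or, for the concrete choice $\textsc{split}(n) = \lceil n/2 \rceil$ used in Example~1, verify it directly. Everything else is routine: there is no unbounded loop, no other source of recursion, and the call-recursion-tree sketched in Fig.~\ref{fig:call_tree} has depth at most $\lceil \log_2 |\ma| \rceil + O(1)$, so the procedure terminates after finitely many steps.
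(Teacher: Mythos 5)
Your proof is correct, and it rests on exactly the same structural fact as the paper's --- that every recursive call strictly shrinks the analyzed set, i.e., $\emptyset \subset \ma_1, \ma_2 \subset \ma$ by the specification of \textsc{split} and \textsc{get} --- but it packages the argument differently. You use strong induction on $|\ma|$: the base case $|\ma|=1$ is discharged by the singleton test in line~\ref{algoline:test_singleton}, and in the inductive step both recursive calls receive strictly smaller (non-empty) analyzed sets and hence terminate by hypothesis. The paper instead argues by contradiction: it supposes an infinite sequence of nested recursive calls, observes that the strictly shrinking finite analyzed sets would eventually force a call with $|\ma|=1$ that nevertheless recurses, contradicts this with the forced return in line~\ref{algoline:return_O}, and then adds a separate observation that the branching factor of the call tree is $2$, so finite depth implies finitely many calls overall. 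Your induction is arguably tidier: well-founded induction on $|\ma|$ subsumes both the depth bound and the finite-branching step at once and avoids the detour through an assumed infinite call sequence. Both arguments lean on the same unproven property of \textsc{split}, namely $1 \le k \le |\ma|-1$, which you are right to flag as a standing assumption; the paper likewise imports it from ``the definition of the \textsc{split} and \textsc{get} functions.'' One cosmetic caveat: your closing remark that the call-recursion-tree has depth $\lceil \log_2 |\ma| \rceil + O(1)$ holds only for the balanced splitting strategy of Example~\ref{ex:workings_qx}; for an arbitrary admissible \textsc{split} the depth may be as large as $|\ma|$, though this does not affect the termination claim.
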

	\begin{proof}
		First, observe that $\scQX$ either reaches line~\ref{algoline:call_QX'} (where $\scQX'$ is called) or terminates before (in line~\ref{algoline:return_no_p-set} or line~\ref{algoline:emptyset}). Hence, $\scQX(\tuple{\ma,\mb})$ always terminates iff $\scQX'(\mb,\tuple{\ma,\mb})$ always terminates. We next show that $\scQX'(\mb,\tuple{\ma,\mb})$ terminates for an arbitrary $p$-PI $\tuple{\ma,\mb}$.
		
		$\scQX'(\mb,\tuple{\ma,\mb})$ either terminates directly (in that it returns in line~\ref{algoline:return_emptyset} or line~\ref{algoline:return_O}) or calls itself recursively in lines~\ref{algoline:recursive_call1} and \ref{algoline:recursive_call2}. 
		However, for each recursive call $\scQX'(\mc',\tuple{\ma',\mb'})$ within $\scQX'(\mb,\tuple{\ma,\mb})$ it holds that $\emptyset \subset \ma' \subset \ma$ as $\ma' \in \setof{\ma_1,\ma_2}$ (see lines~\ref{algoline:get1} and \ref{algoline:get2}) and $\emptyset \subset \ma_1, \ma_2 \subset \ma$ due to the definition of the \textsc{split} and \textsc{get} functions. 
		
		Now, assume an infinite sequence of nested recursive calls of $\scQX'$. Since $\ma$ is finite (Def.~\ref{def:p-PI}), this means that there must be a call $\scQX'(\bc,\tuple{\ba,\bb})$ in this sequence where $|\ba|=1$ and lines~\ref{algoline:recursive_call1} and \ref{algoline:recursive_call2} (next nested recursive call in the infinite sequence) are reached. This is a contradiction to the fact that the test in line~\ref{algoline:test_singleton} enforces a return in line~\ref{algoline:return_O} given that $|\ba|=1$. Consequently, every sequence of nested recursive calls during the execution of $\scQX'(\mb,\tuple{\ma,\mb})$ is finite (i.e., the depth of the call tree is finite).
		
		Finally, there can only be a finite number of such nested recursive call sequences because no more than two recursive calls are made in any execution of $\scQX'$ (i.e., the branching factor of the call tree is $2$). This completes the proof.
		%
		%
		%
	\end{proof}
	\noindent The following proposition witnesses that $\scQX$ is sound in case the sub-procedure $\scQX'$ is never called.
	\begin{proposition}[Correctness of $\scQX$ When Trivial Cases Apply]\label{prop:QX_correct_for_trivial_cases}
		\leavevmode
		\begin{enumerate}[noitemsep,label=(\arabic*)]
			\item \label{prop:QX_correct_for_trivial_cases:enum:no_p-set} $\scQX(\tuple{\ma,\mb})$ returns 'no $p$-set' in line~\ref{algoline:return_no_p-set} iff there is no $p$-set wrt.\ $\tuple{\ma,\mb}$.
			\item \label{prop:QX_correct_for_trivial_cases:enum:emptyset} If $\scQX(\tuple{\ma,\mb})$ returns $\emptyset$ in line~\ref{algoline:emptyset}, $\emptyset$ is a minimal $p$-set wrt.\ $\tuple{\ma,\mb}$.
			\item \label{prop:QX_correct_for_trivial_cases:enum:p(AuB)=1_if_line7_reached} If the execution of $\scQX(\tuple{\ma,\mb})$ reaches line~\ref{algoline:call_QX'}, $p(\ma\cup\mb)=1$ holds.
		\end{enumerate}
	\end{proposition}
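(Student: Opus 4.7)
The plan is to prove each of the three items by directly inspecting which branch of the outer \scQX\ procedure (lines~\ref{algoline:validitytest1}--\ref{algoline:call_QX'}) was taken, and then invoking Prop.~\ref{prop:existence_of_p-set} to translate the branch condition into a statement about (minimal) $p$-sets. No induction or recursion reasoning is needed here, since the claim concerns only the ``trivial-case'' branches that execute before $\scQX'$ is ever called.

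For item~\ref{prop:QX_correct_for_trivial_cases:enum:no_p-set}, I would observe that reaching line~\ref{algoline:return_no_p-set} is equivalent to the guard $p(\ma\cup\mb)=0$ in line~\ref{algoline:validitytest1} evaluating to true (this is an ``iff'' because line~\ref{algoline:return_no_p-set} is the unique code path through that branch, and it is taken exactly when the guard holds). Then Prop.~\ref{prop:existence_of_p-set}.\ref{prop:existence_of_p-set:1} gives the equivalence $p(\ma\cup\mb)=0 \iff $ no $p$-set exists wrt.\ $\tuple{\ma,\mb}$, finishing the case.

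For item~\ref{prop:QX_correct_for_trivial_cases:enum:emptyset}, the fact that $\scQX$ returns $\emptyset$ in line~\ref{algoline:emptyset} implies the guard in line~\ref{algoline:validitytest1} was false (so $p(\ma\cup\mb)=1$) and the guard in line~\ref{algoline:O=0} was true (so $\ma=\emptyset$). Combining these, $p(\mb)=p(\ma\cup\mb)=1$, so by Prop.~\ref{prop:existence_of_p-set}.\ref{prop:existence_of_p-set:2}, $\emptyset$ is the (only) minimal $p$-set wrt.\ $\tuple{\ma,\mb}$. For item~\ref{prop:QX_correct_for_trivial_cases:enum:p(AuB)=1_if_line7_reached}, reaching line~\ref{algoline:call_QX'} requires the line~\ref{algoline:validitytest1} guard to have been false, which is precisely $p(\ma\cup\mb)=1$.

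I do not anticipate any real obstacle: each item reduces to reading off the control-flow structure of lines~\ref{algoline:validitytest1}--\ref{algoline:call_QX'} and applying the already-established Prop.~\ref{prop:existence_of_p-set}. The only subtle point worth being explicit about is the ``iff'' in item~\ref{prop:QX_correct_for_trivial_cases:enum:no_p-set}: one must verify that line~\ref{algoline:return_no_p-set} is the only way $\scQX$ can output 'no $p$-set', which is immediate from inspection of Alg.~\ref{algo:qx} (lines~\ref{algoline:emptyset}, \ref{algoline:call_QX'} and all returns inside $\scQX'$ produce sets, never the string 'no $p$-set').
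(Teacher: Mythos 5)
Your proposal is correct and takes essentially the same approach as the paper: each item is read off directly from the control flow of lines~\ref{algoline:validitytest1}--\ref{algoline:call_QX'} and then translated into a statement about $p$-sets via Prop.~\ref{prop:existence_of_p-set}. The only (immaterial) difference is in item~\ref{prop:QX_correct_for_trivial_cases:enum:emptyset}, where you conclude via $p(\mb)=p(\ma\cup\mb)=1$ and Prop.~\ref{prop:existence_of_p-set}.\ref{prop:existence_of_p-set:2}, while the paper instead combines Prop.~\ref{prop:existence_of_p-set}.\ref{prop:existence_of_p-set:1} with the observation that any $p$-set must be a subset of $\ma=\emptyset$; both arguments are equally valid.
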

	\begin{proof}
		We prove all statements \ref{prop:QX_correct_for_trivial_cases:enum:no_p-set}--\ref{prop:QX_correct_for_trivial_cases:enum:p(AuB)=1_if_line7_reached} in turn.\vspace{5pt}
		
		\noindent\emph{Proof of \ref{prop:QX_correct_for_trivial_cases:enum:no_p-set}:}  
		The fact follows directly from Prop.~\ref{prop:existence_of_p-set}.\ref{prop:existence_of_p-set:1} and the test performed in line~\ref{algoline:validitytest1}.\vspace{5pt}
		
		\noindent\emph{Proof of \ref{prop:QX_correct_for_trivial_cases:enum:emptyset}:}  
		Because line~\ref{algoline:emptyset} is reached, $p(\ma\cup\mb)=1$ (as otherwise a return would have taken place at line~\ref{algoline:return_no_p-set}) and $\ma = \emptyset$ (due to line~\ref{algoline:O=0}) must hold. Since $p(\ma\cup\mb)=1$ implies the existence of a $p$-set wrt.\ $\tuple{\ma,\mb}$ by Prop.~\ref{prop:existence_of_p-set}.\ref{prop:existence_of_p-set:1}, and since any $p$-set wrt.\ $\tuple{\ma,\mb}$ must be a subset of $\ma$ by Def.~\ref{def:min_p-set}, $\emptyset$ is the only (and therefore trivially a minimal) $p$-set wrt.\ $\tuple{\ma,\mb}$.\vspace{5pt}
		
		\noindent\emph{Proof of \ref{prop:QX_correct_for_trivial_cases:enum:p(AuB)=1_if_line7_reached}:}  
		This statement follows directly from the test in line~\ref{algoline:validitytest1} and the fact that line~\ref{algoline:call_QX'} is reached.
	\end{proof}
	We now characterize 
	an invariant which applies to every call of $\scQX'$ throughout the execution of $\scQX$.  
	\begin{definition}[Invariant Property of $\scQX'$]\label{def:invariant_property}
		Let $\scQX'(\mc,\tuple{\ma,\mb})$ be a call of $\scQX'$. Then we say that $\Inv(\mc,\ma,\mb)$ holds for this call iff $$(\mc \neq \emptyset \lor p(\mb)=0) \;\,\land\;\, p(\ma \cup \mb) = 1$$
	\end{definition}
	\noindent The next proposition shows that this invariant holds for the first call of $\scQX'$ in Alg.~\ref{algo:qx}.
	\begin{proposition}[Invariant Holds For First Call of $\scQX'$]\label{prop:invar_holds_for_first_call_of_QX'}
		$\Inv(\bc,\ba,\bb)$ holds for $\scQX'(\bc,\tuple{\ba,\bb})$ given that $\scQX'(\bc,\tuple{\ba,\bb})$ was called in line~\ref{algoline:call_QX'}.
	\end{proposition}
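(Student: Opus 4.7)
The plan is to unfold the call made in line~\ref{algoline:call_QX'} and verify the two conjuncts of the invariant directly from previously established facts. Since line~\ref{algoline:call_QX'} invokes $\scQX'(\mb,\tuple{\ma,\mb})$, the parameters are $\bc = \mb$, $\ba = \ma$, $\bb = \mb$, so the goal reduces to showing
\[
(\mb \neq \emptyset \lor p(\mb) = 0) \;\land\; p(\ma \cup \mb) = 1.
\]

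For the second conjunct, I would simply cite Prop.~\ref{prop:QX_correct_for_trivial_cases}.\ref{prop:QX_correct_for_trivial_cases:enum:p(AuB)=1_if_line7_reached}: since the execution reaches line~\ref{algoline:call_QX'}, $p(\ma \cup \mb) = 1$ follows immediately. For the first conjunct, I would split on whether $\mb$ is empty. If $\mb \neq \emptyset$, the left disjunct holds trivially. Otherwise $\mb = \emptyset$, and then $p(\mb) = p(\emptyset) = 0$ by the monotone property requirement in Def.~\ref{def:monotonic_property}, making the right disjunct hold.

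There is essentially no obstacle here; the proposition is a bookkeeping lemma whose sole purpose is to anchor the induction that subsequent propositions will carry out along the call-recursion-tree. The only mildly non-obvious ingredient is remembering that $p(\emptyset) = 0$ is part of the very definition of monotonicity (rather than a derived fact), which is exactly what keeps the disjunction from collapsing in the edge case $\mb = \emptyset$.
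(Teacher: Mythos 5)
Your proposal is correct and matches the paper's own proof essentially step for step: unfold the call to get $\bc=\mb$, $\ba=\ma$, $\bb=\mb$, cite Prop.~\ref{prop:QX_correct_for_trivial_cases}.\ref{prop:QX_correct_for_trivial_cases:enum:p(AuB)=1_if_line7_reached} for $p(\ba\cup\bb)=1$, and case-split on $\mb=\emptyset$ versus $\mb\neq\emptyset$ using $p(\emptyset)=0$ from Def.~\ref{def:monotonic_property} in the empty case. No gaps; nothing further to add.
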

	\begin{proof}
		Since $\scQX'(\bc,\tuple{\ba,\bb})$ was called in line~\ref{algoline:call_QX'}, we have $\bc = \mb$, $\ba = \ma$ and $\bb = \mb$. 
		Since $p(\ma\cup\mb)=1$ holds in line~\ref{algoline:call_QX'} on account of  Prop.~\ref{prop:QX_correct_for_trivial_cases}.\ref{prop:QX_correct_for_trivial_cases:enum:p(AuB)=1_if_line7_reached}, we have that $p(\ba\cup\bb)=1$. To show that $(\bc\neq\emptyset \lor p(\bb)=0)$, we distinguish the cases $\mb = \emptyset$ and $\mb \neq \emptyset$.
		Let first $\mb = \emptyset$. Due to Def.~\ref{def:monotonic_property}, we have that $p(\bb)=p(\mb)=p(\emptyset)=0$.
		Second, assume $\mb \neq \emptyset$. Since $\bc = \mb$, we directly obtain that $\bc \neq \emptyset$.	
	\end{proof}
	\noindent Given the invariant of Def.~\ref{def:invariant_property} holds for some call of $\scQX'$, we next demonstrate that the output returned by $\scQX'$ is sound (i.e., a minimal $p$-set) when it returns in line~\ref{algoline:return_emptyset} or \ref{algoline:return_O} (i.e., if this call of $\scQX'$ represents a leaf node in the call-recursion-tree). Moreover, we show that the invariant is ``propagated'' to the recursive call of $\scQX'$ in line~\ref{algoline:recursive_call1} (i.e., this invariant remains valid as long as the algorithm keeps going downwards in the call-recursion-tree). 
	\begin{proposition}[Invariant Causes Sound Outputs and Propagates Downwards]\label{prop:correctness_of_QX'_if_invar_and_propagation_of_invar}
		If $\Inv(\mc,\ma,\mb)$ holds for $\scQX'(\mc,\tuple{\ma,\mb})$, then:
		\begin{enumerate}[noitemsep,label=(\arabic*)]
			\item \label{prop:correctness_of_QX'_if_invar_and_propagation_of_invar:enum:return_line10_correct} $\scQX'(\mc,\tuple{\ma,\mb})$ returns $\emptyset$ in line~\ref{algoline:return_emptyset} iff $\emptyset$ is a (minimal) $p$-set wrt.\ $\tuple{\ma,\mb}$.
			\item \label{prop:correctness_of_QX'_if_invar_and_propagation_of_invar:enum:p(b)=0_after_line10} If the execution of $\scQX'(\mc,\tuple{\ma,\mb})$ reaches line~\ref{algoline:test_singleton}, then $p(\mb)=0$ holds.
			\item \label{prop:correctness_of_QX'_if_invar_and_propagation_of_invar:enum:return_line12_correct} If $\scQX'(\mc,\tuple{\ma,\mb})$ returns $\ma$ in line~\ref{algoline:return_O}, then $\ma$ is a minimal $p$-set wrt.\ $\tuple{\ma,\mb}$.
			\item 
			\label{prop:correctness_of_QX'_if_invar_and_propagation_of_invar:enum:propagation_of_invar_to_recursive_call} If the execution of $\scQX'(\mc,\tuple{\ma,\mb})$ reaches line~\ref{algoline:recursive_call1}, where $\scQX'(\bc,\tuple{\ba,\bb})$
			is called, then $\Inv(\bc,\ba,\bb)$.
		\end{enumerate}
	\end{proposition}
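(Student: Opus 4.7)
The plan is to handle each of the four sub-statements in the order they are listed, leveraging the invariant $\Inv(\mc,\ma,\mb)$ together with direct inspection of the guards in lines \ref{algoline:validitytest2}, \ref{algoline:test_singleton} of Alg.~\ref{algo:qx}. No deep machinery is needed; the main obstacle is bookkeeping the parameter substitution in part \ref{prop:correctness_of_QX'_if_invar_and_propagation_of_invar:enum:propagation_of_invar_to_recursive_call}, and being careful with the biconditional in part \ref{prop:correctness_of_QX'_if_invar_and_propagation_of_invar:enum:return_line10_correct}.

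For \ref{prop:correctness_of_QX'_if_invar_and_propagation_of_invar:enum:return_line10_correct}, note that line \ref{algoline:return_emptyset} is executed iff the guard in line \ref{algoline:validitytest2} is satisfied, i.e., iff $\mc \neq \emptyset \land p(\mb)=1$. For the ``$\Rightarrow$'' direction, $p(\mb)=1$ yields by Prop.~\ref{prop:existence_of_p-set}.\ref{prop:existence_of_p-set:2} that $\emptyset$ is the unique minimal $p$-set wrt.\ $\tuple{\ma,\mb}$. For the ``$\Leftarrow$'' direction, if $\emptyset$ is a (minimal) $p$-set wrt.\ $\tuple{\ma,\mb}$, Prop.~\ref{prop:existence_of_p-set}.\ref{prop:existence_of_p-set:2} forces $p(\mb)=1$; the invariant $(\mc\neq\emptyset \lor p(\mb)=0)$ then rules out $\mc=\emptyset$, so $\mc\neq\emptyset$ and the guard in line \ref{algoline:validitytest2} fires. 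For \ref{prop:correctness_of_QX'_if_invar_and_propagation_of_invar:enum:p(b)=0_after_line10}, reaching line \ref{algoline:test_singleton} means the line \ref{algoline:validitytest2} guard failed, i.e., $\mc=\emptyset \lor p(\mb)=0$. In the first case, the invariant $(\mc\neq\emptyset \lor p(\mb)=0)$ forces $p(\mb)=0$; in the second case $p(\mb)=0$ is immediate.

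Part \ref{prop:correctness_of_QX'_if_invar_and_propagation_of_invar:enum:return_line12_correct} is a short synthesis. The return in line \ref{algoline:return_O} requires $|\ma|=1$, and by part \ref{prop:correctness_of_QX'_if_invar_and_propagation_of_invar:enum:p(b)=0_after_line10} we have $p(\mb)=0$, while the invariant gives $p(\ma\cup\mb)=1$. Thus $\ma$ itself is a $p$-set wrt.\ $\tuple{\ma,\mb}$ by Def.~\ref{def:min_p-set}. For minimality, the only proper subset of a singleton $\ma$ is $\emptyset$, and $p(\emptyset\cup\mb)=p(\mb)=0$ shows $\emptyset$ is not a $p$-set; hence no $p$-set $X'\subsetneq \ma$ exists.

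For \ref{prop:correctness_of_QX'_if_invar_and_propagation_of_invar:enum:propagation_of_invar_to_recursive_call}, I would just read off the parameters of the line \ref{algoline:recursive_call1} call: $\bc=\ma_1$, $\ba=\ma_2$, $\bb=\mb\cup\ma_1$. By the specification of \textsc{split} and \textsc{get} (which partition $\ma$ into two non-empty subsets), $\ma_1\neq\emptyset$, hence $\bc\neq\emptyset$, which immediately discharges the first conjunct of $\Inv(\bc,\ba,\bb)$. The second conjunct, $p(\ba\cup\bb)=1$, unfolds to $p(\ma_2\cup\mb\cup\ma_1)=p(\ma\cup\mb)=1$, which is precisely the $p(\ma\cup\mb)=1$ part of the assumed invariant $\Inv(\mc,\ma,\mb)$. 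Hence $\Inv(\bc,\ba,\bb)$ holds, completing the proof.
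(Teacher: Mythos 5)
Your proof is correct and follows essentially the same route as the paper's: each part is discharged by combining the invariant with the guards in lines~\ref{algoline:validitytest2} and \ref{algoline:test_singleton}, using Prop.~\ref{prop:existence_of_p-set}.\ref{prop:existence_of_p-set:2} for the empty-set cases and the identity $\ma_2\cup\mb\cup\ma_1=\ma\cup\mb$ for the propagation step. The only cosmetic difference is that in part~\ref{prop:correctness_of_QX'_if_invar_and_propagation_of_invar:enum:p(b)=0_after_line10} you argue directly from the negated guard rather than citing part~\ref{prop:correctness_of_QX'_if_invar_and_propagation_of_invar:enum:return_line10_correct}, which is equivalent.
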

	\begin{proof} We prove all statements \ref{prop:correctness_of_QX'_if_invar_and_propagation_of_invar:enum:return_line10_correct}--\ref{prop:correctness_of_QX'_if_invar_and_propagation_of_invar:enum:propagation_of_invar_to_recursive_call} in turn.\vspace{5pt}
		
		\noindent\emph{Proof of \ref{prop:correctness_of_QX'_if_invar_and_propagation_of_invar:enum:return_line10_correct}:} 
		``$\Rightarrow$'': We assume that $\scQX'(\mc,\tuple{\ma,\mb})$ returns in line~\ref{algoline:return_emptyset}. By the test performed in line~\ref{algoline:validitytest2}, 
		this can only be the case if $p(\mb)=1$. By Prop.~\ref{prop:existence_of_p-set}.\ref{prop:existence_of_p-set:2}, this implies that $\emptyset$ is a (minimal) $p$-set wrt.\ $\tuple{\ma,\mb}$.
		
		\noindent ``$\Leftarrow$'': We assume that $\emptyset$ is a (minimal) $p$-set wrt.\ $\tuple{\ma,\mb}$. To show that a return takes place in line~\ref{algoline:return_emptyset}, we have to prove that the condition tested in line~\ref{algoline:validitytest2} is true. First, we observe that $p(\mb)=1$ must hold due to Prop.~\ref{prop:existence_of_p-set}.\ref{prop:existence_of_p-set:2}. Since $\Inv(\mc,\ma,\mb)$ holds (see Def.~\ref{def:invariant_property}), we can infer from $p(\mb)=1$ that $\mc\neq \emptyset$. Hence, the condition in line~\ref{algoline:validitytest2} is satisfied. \vspace{5pt}
		
		\noindent\emph{Proof of \ref{prop:correctness_of_QX'_if_invar_and_propagation_of_invar:enum:p(b)=0_after_line10}:} 
		Prop.~\ref{prop:correctness_of_QX'_if_invar_and_propagation_of_invar}.\ref{prop:correctness_of_QX'_if_invar_and_propagation_of_invar:enum:return_line10_correct}
		shows that line~\ref{algoline:test_singleton} is reached iff $\emptyset$ is not a $p$-set wrt.\ $\tuple{\ma,\mb}$ which is the case iff $p(\mb)=0$ due to Prop.~\ref{prop:existence_of_p-set}.\ref{prop:existence_of_p-set:2}.\vspace{5pt}
		
		\noindent\emph{Proof of \ref{prop:correctness_of_QX'_if_invar_and_propagation_of_invar:enum:return_line12_correct}:} 
		A return in line~\ref{algoline:return_O} can only occur if the test in line~\ref{algoline:test_singleton} is positive, i.e., if line~\ref{algoline:test_singleton} is reached and $|\ma|=1$. Moreover, since $\Inv(\mc,\ma,\mb)$ holds, it follows that $p(\ma\cup\mb)=1$. 
		
		First, $p(\ma\cup\mb)=1$ is equivalent to the existence of a $p$-set wrt.\ $\tuple{\ma,\mb}$. Second, by Def.~\ref{def:min_p-set}, a $p$-set wrt.\ $\tuple{\ma,\mb}$ is a subset of $\ma$. Third, $|\ma|=1$ means that $\emptyset$ and $\ma$ are all possible subsets of $\ma$. 
		Fourth, since line~\ref{algoline:test_singleton} is reached, we have that $p(\mb)=0$ by statement~\ref{prop:correctness_of_QX'_if_invar_and_propagation_of_invar:enum:p(b)=0_after_line10} of this Proposition, which implies that $\emptyset$ is not a $p$-set wrt.\ $\tuple{\ma,\mb}$ according to Prop.~\ref{prop:existence_of_p-set}.\ref{prop:existence_of_p-set:2}. Consequently, $\ma$ must be a minimal $p$-set wrt.\ $\tuple{\ma,\mb}$.
		%
		\vspace{5pt}
		
		\noindent\emph{Proof of \ref{prop:correctness_of_QX'_if_invar_and_propagation_of_invar:enum:propagation_of_invar_to_recursive_call}:} 
		Consider the call $\scQX'(\bc,\tuple{\ba,\bb})$ at line~\ref{algoline:recursive_call1}. Due to the definition of the \textsc{split} and \textsc{get} functions ($1\leq k \leq |\ma|-1$, $\ma_1$ includes the first $k$, $\ma_2$ the last $|\ma|-k$ elements of $\ma$) and the fact that $\bc = \ma_1$, the property $\bc \neq \emptyset$ must hold. Moreover, $\ba\cup\bb = \ma_2\cup\mb\cup\ma_1 = \ma\cup\mb$. Due to $\Inv(\mc,\ma,\mb)$, however, we know that $p(\ma\cup\mb)=1$. Therefore, $p(\ba\cup\bb)=1$ must be true. According to Def.~\ref{def:invariant_property}, it follows that $\Inv(\bc,\ba,\bb)$ holds.
	\end{proof}
	\noindent Note, immediately before line~\ref{algoline:recursive_call2} is first reached during the execution of $\scQX$, it must be the case that, for the first time, a recursive call 
	$\scQX'(\mc,\tuple{\ma,\mb})$ 
	made in line~\ref{algoline:recursive_call1} returns (i.e., we reach a leaf node in the call-recursion-tree for the first time and the first backtracking takes place). By Prop.~\ref{prop:correctness_of_QX'_if_invar_and_propagation_of_invar}.\ref{prop:correctness_of_QX'_if_invar_and_propagation_of_invar:enum:return_line10_correct}+\ref{prop:correctness_of_QX'_if_invar_and_propagation_of_invar:enum:return_line12_correct}, the output of this call $\scQX'(\mc,\tuple{\ma,\mb})$, namely $X_2$ in line~\ref{algoline:recursive_call1}, is a minimal $p$-set wrt.\ $\tuple{\ma,\mb}$. We now prove that the invariant property given in Def.~\ref{def:invariant_property} in this case ``propagates'' to the first-ever call of $\scQX'$ in line~\ref{algoline:recursive_call2}.
	\begin{proposition}[If Output of Left Sub-Tree is Sound, Invariant Propagates to Right Sub-Tree]\label{prop:invar_propagates_to_first_call_in_line17}
		Let $\Inv(\mc,\ma,\mb)$ be true for some call $\scQX'(\mc,\tuple{\ma,\mb})$ and let the recursive call $\scQX'(\dmc,\langle\dma,\dmb\rangle)$ in line~\ref{algoline:recursive_call1} during the execution of $\scQX'(\mc,\tuple{\ma,\mb})$ return a minimal $p$-set wrt.\ $\langle\dma,\dmb\rangle$. Then $\Inv(\ddmc,\ddma,\ddmb)$ holds for the recursive call $\scQX'(\ddmc,\langle\ddma,\ddmb\rangle)$ in line~\ref{algoline:recursive_call2} during the execution of $\scQX'(\mc,\tuple{\ma,\mb})$.
	\end{proposition}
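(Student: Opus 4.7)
The plan is to unpack the definitions of the variables passed to the call at line~\ref{algoline:recursive_call2} and then verify the two conjuncts of $\Inv$ separately, using the invariant at the parent call together with the assumed soundness of the left-subtree output $X_2$.

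First I would pin down the concrete bindings. Because $\scQX'(\ddmc,\langle\ddma,\ddmb\rangle)$ is the call issued at line~\ref{algoline:recursive_call2} during the execution of $\scQX'(\mc,\tuple{\ma,\mb})$, inspection of the algorithm gives $\ddma = \ma_1$, $\ddmb = \mb\cup X_2$, and $\ddmc = X_2$, where $\ma_1,\ma_2$ are the partition from lines~\ref{algoline:get1}--\ref{algoline:get2} and $X_2$ is the output of the left recursive call $\scQX'(\ma_1,\langle\ma_2,\mb\cup\ma_1\rangle)$ (so in the notation of the proposition $\dmc=\ma_1$, $\dma=\ma_2$, $\dmb=\mb\cup\ma_1$). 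By hypothesis, $X_2$ is a minimal $p$-set wrt.\ $\langle\dma,\dmb\rangle = \langle\ma_2,\mb\cup\ma_1\rangle$; by Def.~\ref{def:min_p-set} this gives both $X_2\subseteq\ma_2$ and $p(X_2\cup\mb\cup\ma_1)=1$.

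Next I would verify the second conjunct of $\Inv(\ddmc,\ddma,\ddmb)$, namely $p(\ddma\cup\ddmb)=1$. This is a pure rewriting step: $\ddma\cup\ddmb = \ma_1\cup\mb\cup X_2$, which equals $X_2\cup\mb\cup\ma_1$, and this set has predicate value $1$ by the observation above.

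For the first conjunct, $(\ddmc\neq\emptyset \lor p(\ddmb)=0)$, I would do a case split on whether $X_2=\emptyset$. If $X_2\neq\emptyset$, then $\ddmc=X_2\neq\emptyset$ and we are done immediately. If $X_2=\emptyset$, then $\ddmb = \mb\cup X_2 = \mb$, so it suffices to show $p(\mb)=0$. Since line~\ref{algoline:recursive_call2} is reached in the execution of $\scQX'(\mc,\tuple{\ma,\mb})$, line~\ref{algoline:test_singleton} must also have been reached; because $\Inv(\mc,\ma,\mb)$ holds by assumption, Prop.~\ref{prop:correctness_of_QX'_if_invar_and_propagation_of_invar}.\ref{prop:correctness_of_QX'_if_invar_and_propagation_of_invar:enum:p(b)=0_after_line10} then yields $p(\mb)=0$, completing the case.

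The only subtle step is the $X_2=\emptyset$ case of the first conjunct; one might be tempted to derive $p(\mb)=0$ directly from $\Inv(\mc,\ma,\mb)$, but the invariant only gives the disjunction $\mc\neq\emptyset\lor p(\mb)=0$, and the branch $\mc\neq\emptyset$ still requires further argument. Routing this through Prop.~\ref{prop:correctness_of_QX'_if_invar_and_propagation_of_invar}.\ref{prop:correctness_of_QX'_if_invar_and_propagation_of_invar:enum:p(b)=0_after_line10} (which precisely combines the invariant with the fact that line~\ref{algoline:validitytest2} did not trigger a return) bypasses this subtlety cleanly and is, I expect, the main ``aha'' of the proof.
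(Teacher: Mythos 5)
Your proposal is correct and follows essentially the same route as the paper's proof: the second conjunct of the invariant is obtained by rewriting $\ddma\cup\ddmb$ as $X_2\cup\mb\cup\ma_1$ and using the $p$-set property of $X_2$, and the first conjunct is handled by the same case split on $X_2=\emptyset$, invoking Prop.~\ref{prop:correctness_of_QX'_if_invar_and_propagation_of_invar}.\ref{prop:correctness_of_QX'_if_invar_and_propagation_of_invar:enum:p(b)=0_after_line10} to get $p(\mb)=0$ in the empty case. Your closing remark about why the disjunctive invariant alone does not suffice is a correct and useful observation, but the argument itself is the one in the paper.
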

	\begin{proof}
		As per Def.~\ref{def:invariant_property}, we have to show that $(\ddmc \neq \emptyset \lor p(\ddmb)=0) \land p(\ddma \cup \ddmb) = 1$.\vspace{5pt} 
		
		\noindent We first prove $p(\ddma \cup \ddmb) = 1$. Since $X_2$, the set returned by $\scQX'(\dmc,\langle\dma,\dmb\rangle) = \scQX'(\ma_1,\langle\ma_2,\mb\cup\ma_1\rangle)$ in line~\ref{algoline:recursive_call1}, is a minimal $p$-set wrt.\ $\langle\dma,\dmb\rangle = \langle\ma_2,\mb\cup\ma_1\rangle$, we infer by Def.~\ref{def:min_p-set} that $p(X_2 \cup \mb \cup \ma_1) = 1$. However, it holds that $\scQX'(\ddmc,\langle\ddma,\ddmb\rangle)=\scQX'(X_2,\tuple{\ma_1,\mb\cup X_2})$. Therefore, $p(\ddma\cup\ddmb) = p([\ma_1]\cup[\mb\cup X_2]) = 1$.\vspace{5pt}
		
		\noindent It remains to be shown that $(\ddmc \neq \emptyset \lor p(\ddmb)=0)$ holds, which is equivalent to $(X_2 \neq \emptyset \lor p(\mb \cup X_2)=0)$. If $X_2 \neq \emptyset$, we are done. So, let us assume that $X_2 = \emptyset$. In this case, however, we have $p(\mb \cup X_2)=p(\mb)$. As $\Inv(\mc,\tuple{\ma,\mb})$ holds and line~\ref{algoline:recursive_call2} is reached during the execution of $\scQX'(\mc,\tuple{\ma,\mb})$, we know by Prop.~\ref{prop:correctness_of_QX'_if_invar_and_propagation_of_invar}.\ref{prop:correctness_of_QX'_if_invar_and_propagation_of_invar:enum:p(b)=0_after_line10} that $p(\mb)=0$. Hence, 
		$p(\mb \cup X_2) = p(\mb) = 0$.\vspace{5pt}
		
		\noindent Overall, we have demonstrated that $\Inv(\ddmc,\langle\ddma,\ddmb\rangle)$ holds.
	\end{proof}
	\noindent At this point, we know that the invariant property of Def.~\ref{def:invariant_property} remains valid up to and including the first recursive call of $\scQX'$ in line~\ref{algoline:recursive_call2} (i.e., until immediately after the first leaf in the call-recursion-tree is encountered, a single-step backtrack is made, and the first branching to the right is executed). From then on, as long as only ``downward'' calls of $\scQX'$ in line~\ref{algoline:recursive_call1}, possibly interleaved with single calls of $\scQX'$ in line~\ref{algoline:recursive_call2}, are performed, the validity of the invariant is preserved. 
	
	Due to the fact that $\scQX$ terminates (Prop.~\ref{prop:termination}), the call-recursion-tree must be finite. Hence, the situation must occur, where $\scQX'$ called in line~\ref{algoline:recursive_call1} directly returns (i.e., in line~\ref{algoline:return_emptyset} or \ref{algoline:return_O}) and the immediately subsequent call of $\scQX'$ in line~\ref{algoline:recursive_call2} directly returns (i.e., in line~\ref{algoline:return_emptyset} or \ref{algoline:return_O}) as well (i.e., we face the situation where both the left and the right branch at one node in the call-recursion-tree consist only of a single leaf node). As the invariant holds in this right branch, the said call of $\scQX'$ in line~\ref{algoline:recursive_call2} must indeed return a minimal $p$-set wrt.\ its $p$-PI given as an argument, due to Prop.~\ref{prop:correctness_of_QX'_if_invar_and_propagation_of_invar}.\ref{prop:correctness_of_QX'_if_invar_and_propagation_of_invar:enum:return_line10_correct}+\ref{prop:correctness_of_QX'_if_invar_and_propagation_of_invar:enum:return_line12_correct}.
	
	The next proposition evidences---as a special case---that the combination (set-union) of the two outputs $X_2$ (left leaf node) and $X_1$ (right leaf node) returned in line~\ref{algoline:return_upwards} in fact constitutes a minimal $p$-set for the $p$-PI given as an input argument to the call of $\scQX'$ which executes line~\ref{algoline:return_upwards}. More generally, the proposition testifies that, given the calls in line~\ref{algoline:recursive_call1} and line~\ref{algoline:recursive_call2} each return a minimal $p$-set wrt.\ their given $p$-PIs---whether or not these calls directly return---the combination of these $p$-sets is again a minimal $p$-set for the respective $p$-PI at the call that executed lines~\ref{algoline:recursive_call1} and \ref{algoline:recursive_call2}.\footnote{Note, this proposition is stated in \cite{junker04}, but not proven.} 
	\begin{proposition}[If Output of Both Left and Right Sub-Tree is Sound, then a Sound Result is Returned (Propagated Upwards)]\label{prop:if_both_subtrees_sound_then_tree_is_sound}
		Let the recursive call $\scQX'(\dmc,\langle\dma,\dmb\rangle)$ in line~\ref{algoline:recursive_call1} during the execution of $\scQX'(\bc,\tuple{\ba,\bb})$ return 
		a minimal $p$-set wrt.\ $\langle\dma,\dmb\rangle$, and let the recursive call $\scQX'(\ddmc,\langle\ddma,\ddmb\rangle)$ in line~\ref{algoline:recursive_call2} during the execution of $\scQX'(\bc,\tuple{\ba,\bb})$ return 
		a minimal $p$-set wrt.\ $\langle\ddma,\ddmb\rangle$. Then $\scQX'(\bc,\tuple{\ba,\bb})$ returns a minimal $p$-set wrt.\  $\tuple{\ba,\bb}$.	
	\end{proposition}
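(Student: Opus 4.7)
The plan is to show that the set $X_1 \cup X_2$ returned in line~\ref{algoline:return_upwards} is (a)~a $p$-set wrt.\ $\tuple{\ba,\bb}$ and (b)~minimal among such $p$-sets. Throughout, I write $\ma_1,\ma_2$ for the two parts of $\ba$ produced by the \textsc{split}/\textsc{get} calls in lines~\ref{algoline:split}--\ref{algoline:get2}, so that the hypotheses read: $X_2$ is a minimal $p$-set wrt.\ $\langle\ma_2,\bb\cup\ma_1\rangle$ and $X_1$ is a minimal $p$-set wrt.\ $\langle\ma_1,\bb\cup X_2\rangle$.

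For part~(a), I would use Def.~\ref{def:min_p-set} applied to each of the two returned sets: $X_2 \subseteq \ma_2$ and $X_1 \subseteq \ma_1$, hence $X_1 \cup X_2 \subseteq \ma_1 \cup \ma_2 = \ba$. Moreover, the fact that $X_1$ is a $p$-set wrt.\ $\langle\ma_1,\bb\cup X_2\rangle$ directly yields $p(X_1 \cup (\bb\cup X_2)) = p((X_1\cup X_2)\cup\bb) = 1$. So $X_1\cup X_2$ is a $p$-set wrt.\ $\tuple{\ba,\bb}$.

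For part~(b), I would argue by contradiction. Suppose there is a $p$-set $Y$ wrt.\ $\tuple{\ba,\bb}$ with $Y \subsetneq X_1 \cup X_2$. Define $Y_1 := Y \cap \ma_1$ and $Y_2 := Y \cap \ma_2$. Since $\ma_1$ and $\ma_2$ are disjoint and $X_i \subseteq \ma_i$ for $i \in \setof{1,2}$, we have $Y_i \subseteq X_i$, with at least one inclusion being strict (because $Y \subsetneq X_1 \cup X_2$). Now I would split into two cases. If $Y_2 \subsetneq X_2$, then from $Y_1 \subseteq \ma_1$ we infer $Y \cup \bb = Y_1\cup Y_2\cup\bb \subseteq \ma_1 \cup Y_2 \cup \bb$; monotonicity (Def.~\ref{def:monotonic_property}) together with $p(Y\cup\bb)=1$ yields $p(Y_2 \cup (\bb\cup\ma_1))=1$, making $Y_2$ a $p$-set wrt.\ $\langle\ma_2,\bb\cup\ma_1\rangle$ strictly contained in $X_2$---contradicting the minimality of $X_2$. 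Otherwise $Y_2 = X_2$ and $Y_1 \subsetneq X_1$; then $Y = Y_1 \cup X_2$ and $1 = p(Y\cup\bb) = p(Y_1 \cup (\bb\cup X_2))$, so $Y_1$ is a $p$-set wrt.\ $\langle\ma_1,\bb\cup X_2\rangle$ strictly contained in $X_1$---contradicting the minimality of $X_1$.

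The main conceptual step, and the one I would be most careful about, is the case analysis on which of the two inclusions $Y_i \subseteq X_i$ is strict, together with the monotonicity application in the first case which promotes $Y_1$ to the full $\ma_1$ inside the background. Once the bookkeeping of which set plays the role of the analyzed set and which the background in each subcall is handled correctly, both contradictions fall out immediately from Defs.~\ref{def:monotonic_property} and \ref{def:min_p-set}; termination is not an issue here since the statement is about a single level of recursion and presupposes the outcomes of the two child calls.
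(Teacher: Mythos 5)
Your proposal is correct and follows essentially the same route as the paper, which factors the statement through Lemma~\ref{lem:qx_recursion_principle}: establish the $p$-set property via $p(X_1\cup[\bb\cup X_2])=1$, then derive minimality by contradiction with a case split on which of the two intersections is a strict subset, using monotonicity to enlarge the background. The only cosmetic difference is that you order the cases so that the second one has $Y_2=X_2$ exactly, sparing one application of monotonicity that the paper's symmetric case analysis uses.
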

	\begin{proof}
		The statement is a direct consequence of Lemma~\ref{lem:qx_recursion_principle} below.
	\end{proof}
	\begin{lemma}\label{lem:qx_recursion_principle}
		Let $\ma_1, \ma_2$ be a partition of $\ma$. If \emph{(a)}~$X_2$ is a minimal $p$-set wrt.\ $\tuple{\ma_2, \mb \cup \ma_1}$ and \emph{(b)}~$X_1$ is a minimal $p$-set wrt.\ $\tuple{\ma_1, \mb \cup X_2}$, then $X_1 \cup X_2$ is a minimal $p$-set wrt.\ $\tuple{\ma, \mb}$.
	\end{lemma}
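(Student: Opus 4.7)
The plan is to prove the lemma in two parts: (i) $X_1 \cup X_2$ is a $p$-set wrt.\ $\tuple{\ma,\mb}$, and (ii) no strict subset of $X_1 \cup X_2$ is a $p$-set wrt.\ $\tuple{\ma,\mb}$. For (i), I would first note that $X_2 \subseteq \ma_2$ and $X_1 \subseteq \ma_1$ by Def.~\ref{def:min_p-set} applied to hypotheses (a) and (b), so $X_1 \cup X_2 \subseteq \ma_1 \cup \ma_2 = \ma$; moreover, $p((X_1 \cup X_2)\cup \mb) = p(X_1 \cup (\mb \cup X_2)) = 1$ is exactly the $p$-set property witnessed for $X_1$ by hypothesis (b).

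For (ii), I would argue by contradiction. Suppose there exists $Y \subsetneq X_1 \cup X_2$ that is a $p$-set wrt.\ $\tuple{\ma,\mb}$, i.e., $Y \subseteq \ma$ and $p(Y \cup \mb) = 1$. Using that $\ma_1, \ma_2$ partition $\ma$ and $X_i \subseteq \ma_i$, I would decompose $Y = Y_1 \cup Y_2$ by setting $Y_i := Y \cap \ma_i$, which gives $Y_i \subseteq X_i$ for $i=1,2$. Since $Y \subsetneq X_1 \cup X_2$, at least one of the inclusions $Y_i \subseteq X_i$ must be strict, yielding two cases.

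In the case $Y_2 \subsetneq X_2$, monotonicity (Def.~\ref{def:monotonic_property}) applied via $Y_1 \subseteq \ma_1$ gives $p(Y_2 \cup (\mb \cup \ma_1)) \geq p(Y_1 \cup Y_2 \cup \mb) = p(Y \cup \mb) = 1$, which makes $Y_2$ a $p$-set wrt.\ $\tuple{\ma_2, \mb \cup \ma_1}$ strictly contained in $X_2$, contradicting minimality of $X_2$ from (a). Symmetrically, in the case $Y_1 \subsetneq X_1$, monotonicity applied via $Y_2 \subseteq X_2$ gives $p(Y_1 \cup (\mb \cup X_2)) \geq p(Y \cup \mb) = 1$, making $Y_1$ a $p$-set wrt.\ $\tuple{\ma_1, \mb \cup X_2}$ strictly inside $X_1$, contradicting minimality of $X_1$ from (b).

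The crux and only real subtlety is the asymmetric role of the backgrounds in (a) and (b): $X_2$'s minimality is witnessed against the \emph{full} $\ma_1$-enlargement of $\mb$, whereas $X_1$'s minimality is witnessed only against $\mb \cup X_2$. Monotonicity is exactly the tool that compensates for this asymmetry in each direction (enlarging $Y_1$ up to $\ma_1$ in the first case, while keeping $Y_2 \subseteq X_2$ in the second). Once the decomposition $Y = Y_1 \cup Y_2$ is chosen and this asymmetry is recognized, the case analysis is essentially mechanical, so I do not anticipate any significant obstacle beyond correctly selecting the decomposition and keeping the direction of monotonicity straight.
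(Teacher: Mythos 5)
Your proposal is correct and follows essentially the same route as the paper's proof: the same two-part structure ($p$-set property via hypothesis (b), minimality by contradiction), the same decomposition of the hypothetical smaller $p$-set into its $\ma_1$- and $\ma_2$-parts (your $Y\cap\ma_i$ coincides with the paper's $X\cap X_i$ since $Y\subseteq X_1\cup X_2$), and the same two monotonicity arguments contradicting (a) and (b) respectively. No gaps.
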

	\begin{proof} We first show that $X_1 \cup X_2$ is a $p$-set, and then we show its minimality.\vspace{5pt}
		
		\noindent\emph{$p$-set property:}
		First, by Def.~\ref{def:min_p-set}, $X_1 \subseteq \ma_1$ due to (a), and $X_2 \subseteq \ma_2$ due to (b), which is why $X_1 \cup X_2 \subseteq \ma_1 \cup \ma_2 = \ma$.
		From the fact that $X_1$ is a minimal $p$-set wrt.\ $\tuple{\ma_1, \mb \cup X_2}$, along with Def.~\ref{def:min_p-set}, we get $p(X_1 \cup [\mb \cup X_2]) = 1 = p([X_1 \cup X_2] \cup \mb)$. Hence, 
		$X_1 \cup X_2$ is a $p$-set wrt.\ $\tuple{\ma,\mb}$ 
		due to Def.~\ref{def:min_p-set}.\vspace{5pt} 
		
		\noindent\emph{Minimality:}
		To show that $X_1 \cup X_2$ is a \emph{minimal} $p$-set wrt.\ $\tuple{\ma,\mb}$, assume that $X \subset X_1 \cup X_2$ is a $p$-set wrt.\ $\tuple{\ma,\mb}$. 
		The set $X$ can be represented as $X=X'_1 \cup X'_2$ where (1)~$X'_1 := X \cap X_1 \subseteq X_1$ and (2)~$X'_2 := X \cap X_2 \subseteq X_2$. In addition, the $\subseteq$-relation in (1) or (2) must be a $\subset$-relation, i.e., $X$ does not include all elements of $X_1$ or not all elements of $X_2$.
		
		Let us first assume that $\subset$ holds in (1). Then, $X= X'_1 \cup X'_2$ where $X'_1 \subset X_1$ and $X'_2 \subseteq X_2$. Since $X$ is a $p$-set wrt.\ $\tuple{\ma,\mb}$, we have $p(X\cup \mb) = p([X'_1 \cup X'_2] \cup \mb) = p(X'_1 \cup [\mb \cup X'_2])=1$. By monotonicity of $p$, it follows that $p(X'_1 \cup [\mb \cup X_2])=1$. Because of $X'_1 \subset X_1 \subseteq \ma_1$, we have that $X'_1$ is a $p$-set wrt.\ $\tuple{\ma_1,\mb\cup X_2}$, which is a contradiction to the premise (b). 
		
		Second, assume that $\subset$ holds in (2). Then, $X= X'_1 \cup X'_2$ where $X'_1 \subseteq X_1$ and $X'_2 \subset X_2$. Since $X$ is a $p$-set wrt.\ $\tuple{\ma,\mb}$, we have $p(X\cup \mb) = p([X'_1 \cup X'_2] \cup \mb) = p(X'_2 \cup [\mb \cup X'_1])=1$. By monotonicity of $p$, and since $X'_1 \subseteq X_1 \subseteq \ma_1$, it follows that $p(X'_2 \cup [\mb \cup \ma_1])=1$. As $X'_2 \subset X_2 \subseteq \ma_2$, we obtain that $X'_2$ is a $p$-set wrt.\ $\tuple{\ma_2,\mb \cup \ma_1}$, which is a contradiction to premise (a). 
	\end{proof}
	
	\begin{proposition}[If Invariant Holds for Tree, Then a Minimal $p$-Set is Returned By Tree]\label{prop:if_invar_holds_for_tree_then_min_p-set_returned_by_tree}
		If $\Inv(\bc,\tuple{\ba,\bb})$ holds for $\scQX'(\bc,\tuple{\ba,\bb})$, then it returns a minimal $p$-set wrt.\ $\tuple{\ba,\bb}$.	
	\end{proposition}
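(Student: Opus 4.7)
The plan is to prove this by strong induction on $|\ba|$, since the termination argument (Prop.~\ref{prop:termination}) already relies on the fact that each recursive call strictly shrinks the analyzed set, and all the propositional machinery needed for one inductive step has been set up in Props.~\ref{prop:correctness_of_QX'_if_invar_and_propagation_of_invar}, \ref{prop:invar_propagates_to_first_call_in_line17}, and \ref{prop:if_both_subtrees_sound_then_tree_is_sound}. Intuitively, the induction walks through the call-recursion-tree starting from its leaves (base case) and moving upwards through combinations of left-subtree and right-subtree results (inductive step), exactly as visualized in Fig.~\ref{fig:call_tree}.

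For the \emph{base case} ($|\ba| = 1$, or more generally any direct return), I would observe that $\scQX'(\bc,\tuple{\ba,\bb})$ can only terminate via a direct return in line~\ref{algoline:return_emptyset} or line~\ref{algoline:return_O}. Since $\Inv(\bc,\ba,\bb)$ holds by hypothesis, Prop.~\ref{prop:correctness_of_QX'_if_invar_and_propagation_of_invar}.\ref{prop:correctness_of_QX'_if_invar_and_propagation_of_invar:enum:return_line10_correct} covers the first case (yielding the minimal $p$-set $\emptyset$), and Prop.~\ref{prop:correctness_of_QX'_if_invar_and_propagation_of_invar}.\ref{prop:correctness_of_QX'_if_invar_and_propagation_of_invar:enum:return_line12_correct} covers the second (yielding the minimal $p$-set $\ba$). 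In particular, when $|\ba|=1$, no recursion can be entered since line~\ref{algoline:test_singleton} forces a return.

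For the \emph{inductive step} ($|\ba| \ge 2$, assuming the claim for all strictly smaller analyzed sets), suppose $\scQX'(\bc,\tuple{\ba,\bb})$ does not return directly. Then it reaches line~\ref{algoline:recursive_call1} and subsequently line~\ref{algoline:recursive_call2}, issuing the calls $\scQX'(\dmc,\tuple{\dma,\dmb}) = \scQX'(\ma_1,\tuple{\ma_2,\mb\cup\ma_1})$ and $\scQX'(\ddmc,\tuple{\ddma,\ddmb}) = \scQX'(X_2,\tuple{\ma_1,\mb\cup X_2})$. By Prop.~\ref{prop:correctness_of_QX'_if_invar_and_propagation_of_invar}.\ref{prop:correctness_of_QX'_if_invar_and_propagation_of_invar:enum:propagation_of_invar_to_recursive_call}, $\Inv(\dmc,\dma,\dmb)$ holds; because $|\dma| = |\ma_2| < |\ba|$ (as $\ma_1,\ma_2$ is a partition of $\ba$ into non-empty parts), the induction hypothesis applies and the left call returns a minimal $p$-set $X_2$ wrt.\ $\tuple{\dma,\dmb}$. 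Now Prop.~\ref{prop:invar_propagates_to_first_call_in_line17} yields $\Inv(\ddmc,\ddma,\ddmb)$; since $|\ddma| = |\ma_1| < |\ba|$, the induction hypothesis again applies and the right call returns a minimal $p$-set $X_1$ wrt.\ $\tuple{\ddma,\ddmb}$. Finally, Prop.~\ref{prop:if_both_subtrees_sound_then_tree_is_sound} combines the two subtree results into $X_1 \cup X_2$, which is returned in line~\ref{algoline:return_upwards} and is a minimal $p$-set wrt.\ $\tuple{\ba,\bb}$.

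The main (and really the only) obstacle is making the induction structurally clean: ensuring that the \emph{right} recursive call falls under the induction hypothesis, since its analyzed set $\ma_1$ is not a direct subset of a previously-analyzed set but arises after backtracking. This is resolved by inducting on $|\ba|$ rather than on recursion depth, and by noting that both $|\ma_1|$ and $|\ma_2|$ are strictly smaller than $|\ba|$ thanks to the guaranteed non-empty two-way partition produced by \textsc{split}/\textsc{get}. Once that is in place, everything else is a direct plug-in of earlier propositions.
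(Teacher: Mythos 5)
Your proposal is correct and follows essentially the same route as the paper's proof: the base case is the direct return handled via Prop.~\ref{prop:correctness_of_QX'_if_invar_and_propagation_of_invar}.\ref{prop:correctness_of_QX'_if_invar_and_propagation_of_invar:enum:return_line10_correct}+\ref{prop:correctness_of_QX'_if_invar_and_propagation_of_invar:enum:return_line12_correct}, and the inductive step chains Prop.~\ref{prop:correctness_of_QX'_if_invar_and_propagation_of_invar}.\ref{prop:correctness_of_QX'_if_invar_and_propagation_of_invar:enum:propagation_of_invar_to_recursive_call}, the induction hypothesis, Prop.~\ref{prop:invar_propagates_to_first_call_in_line17}, the induction hypothesis again, and Prop.~\ref{prop:if_both_subtrees_sound_then_tree_is_sound}, exactly as in the paper. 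The only deviation is the induction measure---the paper inducts on the maximal number of recursive calls on the stack, while you induct on $|\ba|$---and both are valid well-founded measures that strictly decrease at each recursive call, so your variant is equally sound.
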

	\begin{proof}
		We prove this proposition by induction on $d$ where $d$ is the maximal number of \emph{recursive}\footnote{That is, \emph{additional} calls made, \emph{not} taking into account the running routine $\scQX'(\bc,\tuple{\ba,\bb})$ that we consider in the proposition.} calls of $\scQX'$ on the call stack throughout the execution of $\scQX'(\bc,\tuple{\ba,\bb})$.\vspace{5pt}
		
		\noindent\emph{Induction Base:} Let $d=0$. That is, no recursive calls are executed, or, equivalently, $\scQX'(\bc,\tuple{\ba,\bb})$ returns in line~\ref{algoline:return_emptyset} or \ref{algoline:return_O}. Since $\Inv(\bc,\tuple{\ba,\bb})$ is true, a minimal $p$-set wrt.\ $\tuple{\ba,\bb}$ is returned, which follows from Prop.~\ref{prop:correctness_of_QX'_if_invar_and_propagation_of_invar}.\ref{prop:correctness_of_QX'_if_invar_and_propagation_of_invar:enum:return_line10_correct}+\ref{prop:correctness_of_QX'_if_invar_and_propagation_of_invar:enum:return_line12_correct}.\vspace{5pt}
		
		\noindent\emph{Induction Assumption:} Let the statement of the proposition be true for $d=k$. We will now show that, in this case, the statement holds for $d = k+1$ as well.\vspace{5pt}
		
		\noindent\emph{Induction Step:} 
		Assume that (at most) $k+1$ recursive calls are ever on the call stack while $\scQX'(\bc,\tuple{\ba,\bb})$ executes. Since $\Inv(\bc,\tuple{\ba,\bb})$ holds, Prop.~\ref{prop:correctness_of_QX'_if_invar_and_propagation_of_invar}.\ref{prop:correctness_of_QX'_if_invar_and_propagation_of_invar:enum:propagation_of_invar_to_recursive_call} lets us conclude that $\Inv(\dmc,\langle\dma,\dmb\rangle)$ holds for the first recursive call $\scQX'(\dmc,\langle\dma,\dmb\rangle)$ issued in line~\ref{algoline:recursive_call1} of $\scQX'(\bc,\tuple{\ba,\bb})$. Now, we have that, for $\scQX'(\dmc,\langle\dma,\dmb\rangle)$, the maximal number of recursive calls ever on the call stack while it executes, is (at most) $k$. Therefore, by the \emph{Induction Assumption}, $\scQX'(\dmc,\langle\dma,\dmb\rangle)$ returns a minimal $p$-set wrt.\ $\langle\dma,\dmb\rangle$. 
		
		Because $\Inv(\bc,\tuple{\ba,\bb})$ holds and $\scQX'(\dmc,\langle\dma,\dmb\rangle)$ called in line~\ref{algoline:recursive_call1} during the execution of $\scQX'(\bc,\tuple{\ba,\bb})$ returns a minimal $p$-set wrt.\ $\langle\dma,\dmb\rangle$, we deduce by means of Prop.~\ref{prop:invar_propagates_to_first_call_in_line17} that $\Inv(\ddmc,\langle\ddma,\ddmb\rangle)$ holds for the call $\scQX'(\ddmc,\langle\ddma,\ddmb\rangle)$ made in line~\ref{algoline:recursive_call2} during the execution of $\scQX'(\bc,\tuple{\ba,\bb})$. Again, it must be true that the maximal number of recursive calls ever on the call stack while $\scQX'(\ddmc,\langle\ddma,\ddmb\rangle)$ executes is (at most) $k$. Consequently, $\scQX'(\ddmc,\langle\ddma,\ddmb\rangle)$ returns a minimal $p$-set wrt.\ $\langle\ddma,\ddmb\rangle$ due to the \emph{Induction Assumption}.
		
		As both recursive calls made throughout the execution of $\scQX'(\bc,\tuple{\ba,\bb})$ return a minimal $p$-set wrt.\ their given $p$-PIs $\langle\dma,\dmb\rangle$ and $\langle\ddma,\ddmb\rangle$, respectively, we conclude by Prop.~\ref{prop:if_both_subtrees_sound_then_tree_is_sound} that $\scQX'(\bc,\tuple{\ba,\bb})$ returns a minimal $p$-set wrt.\ $\tuple{\ba,\bb}$.
		
		This completes the inductive proof.
	\end{proof}
	\begin{theorem}[Correctness of QX]
		Let $\tuple{\ma,\mb}$ be a $p$-PI. Then, $\scQX(\tuple{\ma,\mb})$ returns a minimal $p$-PI wrt.\ $\tuple{\ma,\mb}$ if a $p$-set exists for $\tuple{\ma,\mb}$. Otherwise, $\scQX(\tuple{\ma,\mb})$ returns 'no $p$-set'. 
	\end{theorem}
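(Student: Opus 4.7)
The plan is to prove the theorem by a straightforward case split on which line of $\scQX(\tuple{\ma,\mb})$ produces the return value, using the propositions already established as building blocks. Concretely, $\scQX$ can only return via line~\ref{algoline:return_no_p-set}, line~\ref{algoline:emptyset}, or via the call $\scQX'(\mb,\tuple{\ma,\mb})$ in line~\ref{algoline:call_QX'}, and by Prop.~\ref{prop:termination} one of these three cases must actually occur.

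First, I would dispatch the trivial cases. If no $p$-set exists for $\tuple{\ma,\mb}$, then by Prop.~\ref{prop:existence_of_p-set}.\ref{prop:existence_of_p-set:1} we have $p(\ma\cup\mb)=0$, so the test in line~\ref{algoline:validitytest1} fires and 'no $p$-set' is correctly returned by Prop.~\ref{prop:QX_correct_for_trivial_cases}.\ref{prop:QX_correct_for_trivial_cases:enum:no_p-set}. Conversely, if a $p$-set does exist and $\ma=\emptyset$, then by Prop.~\ref{prop:QX_correct_for_trivial_cases}.\ref{prop:QX_correct_for_trivial_cases:enum:emptyset} the execution reaches line~\ref{algoline:emptyset} and correctly returns $\emptyset$, which is the (unique, minimal) $p$-set wrt.\ $\tuple{\ma,\mb}$.

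The interesting case is when a $p$-set exists and $\ma\neq\emptyset$, so that execution reaches line~\ref{algoline:call_QX'} and invokes $\scQX'(\mb,\tuple{\ma,\mb})$. Here I would invoke Prop.~\ref{prop:invar_holds_for_first_call_of_QX'} to conclude that $\Inv(\mb,\ma,\mb)$ holds for this initial call, and then Prop.~\ref{prop:if_invar_holds_for_tree_then_min_p-set_returned_by_tree} to conclude that the call returns a minimal $p$-set wrt.\ $\tuple{\ma,\mb}$, which is precisely the value that $\scQX(\tuple{\ma,\mb})$ propagates to the caller.

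I do not expect any genuine obstacle here, since the heavy lifting (propagation of the invariant, soundness at leaves, and the recursion principle of Lemma~\ref{lem:qx_recursion_principle}) has already been carried out. The only thing to be careful about is to phrase the case split as a complete partition of the possible executions of $\scQX$ (matching exactly the three possible return points), so that the theorem's biconditional flavor—output is 'no $p$-set' iff no $p$-set exists, and otherwise is a minimal $p$-set—is established in both directions without appealing to anything beyond Props.~\ref{prop:QX_correct_for_trivial_cases}, \ref{prop:invar_holds_for_first_call_of_QX'}, and \ref{prop:if_invar_holds_for_tree_then_min_p-set_returned_by_tree}.
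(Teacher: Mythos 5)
Your proposal is correct and follows essentially the same route as the paper's own proof: a case split over the three possible return points, dispatching the trivial cases via Prop.~\ref{prop:QX_correct_for_trivial_cases} and the recursive case by chaining Prop.~\ref{prop:invar_holds_for_first_call_of_QX'} with Prop.~\ref{prop:if_invar_holds_for_tree_then_min_p-set_returned_by_tree}. The only (harmless) difference is that you additionally cite termination to justify that the case split is exhaustive, which the paper leaves implicit at this point.
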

	\begin{proof}
		Prop.~\ref{prop:QX_correct_for_trivial_cases}.\ref{prop:QX_correct_for_trivial_cases:enum:no_p-set}, first, proves that 'no $p$-set' is returned if there is no $p$-set wrt.\ $\tuple{\ma,\mb}$. Second, it shows that, if there is a $p$-set wrt.\ $\tuple{\ma,\mb}$, $\scQX$ will either return in line~\ref{algoline:emptyset} or call $\scQX'$ in line~\ref{algoline:call_QX'}. 
		
		We now show that, in both of these cases, $\scQX$ returns a minimal $p$-set wrt.\ $\tuple{\ma,\mb}$. This then implies that a minimal $p$-set is returned by $\scQX$ whenever such a one exists.
		
		First, if $\scQX$ returns in line~\ref{algoline:emptyset}, then the output is a minimal $p$-set wrt.\ $\tuple{\ma,\mb}$ due to Prop.~\ref{prop:QX_correct_for_trivial_cases}.\ref{prop:QX_correct_for_trivial_cases:enum:emptyset}. 
		
		Second, if $\scQX$ calls $\scQX'(\bc,\tuple{\ba,\bb})$ in line~\ref{algoline:call_QX'}, then $\Inv(\bc,\ba,\bb)$ holds according to Prop.~\ref{prop:invar_holds_for_first_call_of_QX'}. Finally, since $\Inv(\bc,\ba,\bb)$ holds for $\scQX'(\bc,\tuple{\ba,\bb})$, Prop.~\ref{prop:if_invar_holds_for_tree_then_min_p-set_returned_by_tree} establishes that $\scQX'(\bc,\tuple{\ba,\bb})$ returns a minimal $p$-set wrt.\ $\tuple{\ma,\mb}$.   
	\end{proof}

	\section{Conclusion}
	\label{sec:conclusion}
	\textsc{QuickXPlain} (QX) is a very popular, highly cited, and frequently employed, adapted, and extended algorithm to solve the MSMP problem, i.e., to find a subset of a given universe such that this subset is irreducible subject to a monotone predicate (e.g., logical consistency). MSMP is an important and common problem and its manifestations occur in a wide range of computer science disciplines. Since QX has in practice turned out to be hardly understood by many---experienced academics included---and was published without a proof, 
	we account for that by providing for QX an intelligible
	\emph{proof that explains}. The availability and accessibility of a formal proof is instrumental in various regards. Beside allowing the verification of QX's correctness (\emph{proof effect}), it fosters proper and full understanding of QX and of other works relying on QX (\emph{didactic effect}), it is a necessary foundation for ``gapless'' correctness proofs of numerous algorithms, e.g., in model-based diagnosis, that rely on (results computed by) QX (\emph{completeness effect}), it makes the intuition of QX's correctness bullet-proof and excludes the later detection of algorithmic errors, as was already experienced even for seminal works in the past (\emph{trust and sustainability effect}), as well as it might be used as a template for devising proofs of other recursive algorithms (\emph{transfer effect}). 
	Since \emph{(i)}~we exemplify the workings of QX using a novel 
	tried and tested
	well-comprehensible notation, and 
	\emph{(ii)}~we put a special emphasis on the clarity and didactic value of the given proof (e.g., by segmenting the proof into small, intuitive, and easily-digestible chunks, and by showing how our proof can be ``directly traced'' using the recursive call tree produced by QX), we believe that this work can decisively contribute to a better understanding of QX, which we expect to be of great value for both practitioners and researchers.
	%


	\section*{Acknowledgments}
	This work was partly supported by the Austrian Science Fund (FWF), contract P-32445-N38.  
	
	
	\fontsize { 8pt }{ 9pt } 
	\selectfont

\end{document}